
\documentclass[journal]{IEEEtran}
\usepackage{amsmath,amssymb,amsfonts}
\usepackage{cite}
\usepackage{algorithm}
\usepackage{algorithmic}
\usepackage{graphicx}
\usepackage{caption}
\usepackage{lipsum}
\usepackage{enumerate}
\usepackage{amsthm}
\usepackage{color,soul}
\usepackage{booktabs}
\usepackage{hyperref}
\hypersetup{
	bookmarksnumbered=true,
}
\newcommand{\Exp}{\mathop{\mathbb E}\displaylimits}

\newtheorem{lemma}{Lemma}
\newtheorem{theorem}{Theorem}

\newcommand\VECTOR{}  
\newcommand\SPACE{\mathcal}  

\ifCLASSOPTIONcompsoc
  \usepackage[caption=false,font=normalsize,labelfont=sf,textfont=sf]{subfig}
\else
  \usepackage[caption=false,font=footnotesize]{subfig}
\fi

\hyphenation{op-tical net-works semi-conduc-tor}

\begin{document}

\title{Encoding Distributional Soft Actor-Critic for Autonomous Driving in Multi-lane Scenarios}

\author{Jingliang Duan, Yangang Ren, Fawang Zhang, Yang Guan, Dongjie Yu, Shengbo Eben Li,  Bo Cheng, Lin Zhao 
\thanks{
Jingliang Duan and Yangang Ren contributed equally to this work. All correspondences should be sent to S. Li with email: lisb04@gmail.com. }
\thanks{J. Duan, Y. Ren, Y. Guan, D. Yu, S. Li, and B. Cheng are with State Key Lab of Automotive Safety and Energy, School of Vehicle and Mobility, Tsinghua University, Beijing, 100084, China. They are also with Center for Intelligent Connected Vehicles and Transportation, Tsinghua University. {\tt\small Email: duanjl@nus.edu.sg, (ryg18, guany17, ydj20)@mails.tsinghua.edu.cn; (lishbo, chengbo)@tsinghua.edu.cn}.
}
\thanks{J. Duan and L. Zhao are with the Department of Electrical and Computer Engineering, National University of Singapore, Singapore. {\tt\small Email: (duanjl,elezhli)@nus.edu.sg}.
}
\thanks{F. Zhang is with the College of Engineering, China Agricultural University, Beijing, 100084, China. {\tt\small Email: fawang\_troy\_zhang@163.com}.
}
}

\maketitle
\begin{abstract}
In this paper, we propose a new reinforcement learning (RL) algorithm, called encoding distributional soft actor-critic (E-DSAC), for decision-making in autonomous driving. Unlike existing RL-based decision-making methods, E-DSAC is suitable for situations where the number of surrounding vehicles is variable and eliminates the requirement for manually pre-designed sorting rules, resulting in higher policy performance and generality. We first develop an encoding distributional policy iteration (DPI) framework by embedding a permutation invariant module, 
which employs a feature neural network (NN) to encode the indicators of each vehicle, in the distributional RL framework. The proposed DPI framework is proved to exhibit important properties in terms of convergence and global optimality. Next, based on the developed encoding DPI framework, we propose the E-DSAC algorithm by adding the gradient-based update rule of the feature NN to the policy evaluation process of the DSAC algorithm. Then, the multi-lane driving task and the corresponding reward function are designed to verify the effectiveness of the proposed algorithm. Results show that the policy learned by E-DSAC can realize efficient, smooth, and relatively  safe  autonomous driving in the designed scenario. And the final policy performance learned by E-DSAC is about three times that of DSAC. Furthermore, its effectiveness has also been verified in real vehicle experiments.
\end{abstract}

\begin{IEEEkeywords}
Reinforcement learning, permutation invariance, autonomous driving, DSAC.
\end{IEEEkeywords}

\IEEEpeerreviewmaketitle

\section{Introduction}
Autonomous driving has tremendous potential to benefit traffic safety and efficiency as well as change radically future mobility and transportation. In the architecture of automated vehicles, decision-making is a key component to realize autonomy \cite{pendleton2017perception}.
Although the rule-based method has achieved fair performance in specific driving scenarios like DARPA \cite{buehler2009darpa}, it is non-trivial to design behavioral rules manually in a complex driving environment. Inspired by the success of reinforcement learning (RL) on games and robotic control \cite{mnih2015DQN,silver2017mastering}, RL-enabled methods have become a powerful learning framework in autonomous driving, capable of learning complicated policies for high dimensional environments.

As a pioneering work, Lillicap \emph{et al.} (2016) proposed a Deep Deterministic Policy Gradient (DDPG) algorithm, and employed it to realize lane-keeping on the TORCS platform with the simulated images as policy inputs \cite{lillicrap2015DDPG}. Inspired by this study, Wayve (2018) applied the DDPG algorithm to a real vehicle equipped with a monocular camera, achieving a fairly good lane-following effect on a 250m-long rural road \cite{wayve}. Wolf \emph{et al.} (2017) employed Deep Q-network (DQN) to learn a Q-value network for lane-keeping, which maps the simulated image to the expected return of 5 different steering angle values. At each instant, the ego vehicle selects and executes the steering angle with the highest Q-value \cite{wolf2017learning}. Besides, other RL algorithms, such as A3C \cite{perot2017end,jaritz2018end} , inverse RL \cite{zou2018inverse}, and SAC \cite{chen2019SACdriving}, were also utilized to learn policy, which makes decisions based on simulated images.

The common feature of the aforementioned works is to utilize the original sensor information such as images or point clouds as driving states, which are mapped to the corresponding control command through the learned policy. This is a typical end-to-end decision-making framework originated from DQN \cite{mnih2015DQN}, which maintains an integrated perception and decision-making module and eliminates the need for manually designed driving states. However, the data generated in the driving simulators during training usually suffers from a great discrepancy with real vehicle sensors. This characteristic probably leads to the lacking of robustness and limits its scalability to practical applications.  Additionally, to learn a policy based on the raw information, RL algorithms should simultaneously tackle two problems : (i) extract critical indicators like speed or relative distance between vehicles as the underlying state representation, and (ii) learning to maximize expected return. This places a heavy burden on the learning progress, especially in a complex driving environment. Therefore, the RL-based end-to-end decision-making methods are usually applied to simple driving tasks, such as lane keeping wherein no other surrounding vehicles are involved \cite{lillicrap2015DDPG,wayve,wolf2017learning,perot2017end,jaritz2018end, zou2018inverse}, and path tracking \cite{chen2019SACdriving}.

Isele \emph{et al.} (2018) showed that compared with end-to-end decision-making that takes raw sensors data as inputs, real-valued representations, such as relative speed and distance from surrounding vehicles, can greatly improve the driving performance, due to the reduced state space being easier to learn and the meaningful indicators helping the system to generalize \cite{isele2018navigating}.
Under this scheme, Wang \emph{et al.} (2017) combined the recurrent neural network and the DQN algorithm to learn a policy for on-ramp merging, which outputs acceleration and steering angle based on the indicators of two surrounding vehicles on the target lane \cite{wang2017formulation}. Duan \emph{et al.} (2020) established a 26-dimensional state vector, consisting of the indicators of the four nearest surrounding vehicles, as well as the road and destination information \cite{duan2020hierarchical}. Based on this state description and a parallel RL algorithm, the ego vehicle successfully learned to execute multiple behaviors, such as car-following, lane-changing, and overtaking, on a simulated two-lane highway.
Guan \emph{et al.} (2020) developed a cooperative control for 8 connected automated vehicles at an unsignalized intersection \cite{GUAN}, where each vehicle is represented by its velocity and distance to the intersection center, thereby formulating a 16-dimensional state vector. The indicators of each vehicle are sorted according to a pre-designed order, and the Proximal Policy Optimization (PPO) algorithm \cite{schulman2017PPO} is adopted to obtain the final converged policy.

Although existing RL methods based on vector-based state representation have attained some elegant demonstrations in complex traffic scenarios, they still suffer two inevitable shortcomings resulting from the high dynamics of surrounding vehicles, namely, (1) dimension sensitivity and (2) permutation sensitivity. The dimension sensitivity means that RL-enabled algorithms relying on approximation functions, such as multi-layer neural network (NN), require that the input dimensions of policy and value functions are fixed. However, the state vector with a prior fixed dimension can poorly reflect surrounding vehicles because their number and relative positions within the perception range usually change dynamically during riding. Existing researches \cite{wang2017formulation,mirchevska2018high,duan2020hierarchical,wang2018reinforcement, wang2019continuous,GUAN} usually meet the fixed dimension requirement by removing the information of some vehicles or adding virtual vehicles. Obviously, the former will cause the loss of information, while the latter probably leads to a redundant representation. 

Permutation sensitivity means that for the same driving environment, different orders of surrounding vehicles in the state vector will formulate distinct state vectors,  resulting in quite different policy outputs. This unreasonable phenomenon will bring difficulties to policy learning. One intuitive approach is to manually design a permutation rule for all surrounding vehicles, for example, sorting vehicles in increasing order according to their relative distance from the ego \cite{duan2020hierarchical, GUAN}. However, this brings a burden of
manually designing permutation rules separately for different driving scenarios, and also introduces the discontinuity of the vectors, which is prone to reducing the policy performance.
Moreover, most RL-based decision-making algorithms have been developed and work well on sorts of simulators like CARLA \cite{dosovitskiy2017carla} and SUMO \cite{SUMO2018}, but their effectiveness on real vehicles has rarely been validated.

In this paper, we propose a new RL algorithm for autonomous driving  decision making, called encoding distributional soft actor-critic (E-DSAC), which can deal with the dimension sensitivity and permutation sensitivity problems mentioned above. The contributions and novelty of this paper are summarized as follows:
\begin{enumerate}
\item Inspired by the existing permutation invariant state representation method that introduces a feature NN to encode the indicators of each surrounding vehicle into an encoding vector \cite{duan2021fixeddimensional}, an encoding distributional policy iteration (DPI) framework is developed by embedding the encoding module in the distributional RL framework. The proposed DPI framework is proved to exhibit important properties in terms of convergence and global optimality. 
\item Based on the developed encoding DPI framework, we propose the encoding DSAC (E-DSAC) algorithm by adding the gradient-based update rule of feature NN to the policy evaluation process of the DSAC algorithm \cite{duan2021distributional}. Compared with existing RL-based decision-marking methods \cite{wang2017formulation,mirchevska2018high,duan2020hierarchical,wang2018reinforcement, wang2019continuous,GUAN}, E-DSAC  is suitable for situations where the number of surrounding vehicles is variable and eliminates the requirement for manually pre-designed sorting rules,  leading to higher policy performance and generality. 
\item The multi-lane driving task and the corresponding reward function are designed to verify the effectiveness of the proposed algorithm. Results show that the policy learned by E-DSAC can realize  efficient, smooth, and relatively  safe  autonomous driving in the designed scenario. And the final policy performance learned by E-DSAC is about three times that of DSAC. Furthermore, its effectiveness has also been verified in real vehicle experiments.
\end{enumerate}

The paper is organized as follows. In Section \ref{sec.preliminary}, we introduce the key notations and some preliminaries. Section \ref{sec:method} develops the encoding DPI framework and proposes the E-DSAC algorithm. Section \ref{sec:simulation} presents the simulation results in a four-lane highway driving scenario that show the efficacy of E-DSAC, and Section \ref{sec:real_veh_test} demonstrates the performance of E-DSAC in real vehicle experiments.  Finally, Section \ref{sec:conclusion} concludes this paper.

\section{Preliminaries}
\label{sec.preliminary}
In this section, we first introduce the basic principles of reinforcement learning (RL) and the distributional soft actor-critic (DSAC) algorithm. Then, we will show some key notations about the driving state representation.
\subsection{Basic Principles of RL}
\label{sec.notation}
We describe the autonomous driving process in a standard RL setting wherein an agent interacts with an environment in discrete time.
At current state $s_t$, the agent will take action $a_t$ according to policy $\pi$ and then the environment will return next state $s_{t+1}$ according to the environment model $p(s_{t+1}|s_t,a_t)$, i.e, $s_{t+1}  \sim p(s_{t+1}|s_t,a_t)$ and a scalar reward $r_t$. This process will repeat until the episode ends. In this paper, the policy is assumed to be stochastic, denoted as $\pi(a_t|s_t)$, which maps a given state to a probability distribution over actions. To ease the exposition, the current and next state-action pairs are also denoted as $(s,a)$ and $(s',a')$, respectively, and we will use $\rho_{\pi}$ to denote the state or state-action distribution induced by policy $\pi$.

The standard RL aims to learn a policy to maximize the expected accumulated return. This paper employs a more general entropy-augmented policy objective, which is widely used in soft RL methods \cite{Haarnoja2017Soft-Q,Haarnoja2018SAC,duan2021distributional}, to encourage exploration and further improve performance
\begin{equation}
\label{eq.policy_objective}
J_{\pi} = \Exp_{(s_{i \ge t},a_{i \ge t})\sim \rho_{\pi}}\Big[\sum^{\infty}_{i=t}\gamma^{i-t} [r_i+\alpha\mathcal{H}(\pi(\cdot|s_i))]\Big],
\end{equation}
where $\gamma \in (0, 1)$ is the discount factor, $\mathcal{H}(\pi(\cdot|s))=\Exp_{a\sim \pi}[-{\rm log} \pi(a|s)$] is the policy entropy, $\alpha$ is the coefficient that determines the relative importance of the entropy term against the reward. Under this scheme, we define the Q-value as
\begin{equation}
\label{eq.Q_definition}
\begin{aligned}
&Q^{\pi}(s_t,a_t)\\
&=\Exp_{\VECTOR{s_i,a_i\ge t}\sim\pi}\Big[r(s_t, a_t)+\sum^{\infty}_{i=t+1}\gamma^{i-t} [r(s_i,a_i)-\alpha {\rm log} \pi(a_i|s_i)]\Big]\\
&=r(s_t, a_t)+\gamma \Exp_{s'\sim p,a'\sim\pi}[Q^{\pi}(s',a') -\alpha {\rm log} \pi(a'|s') ].
\end{aligned}
\end{equation}
to indicate the return for choosing $a_t$ in state $s_t$ and thereafter following policy $\pi$. Then the objective \eqref{eq.policy_objective} of soft RL can be rewritten as 
\begin{equation}
\label{eq.policy_imp}
\pi_{\rm{new}}=\arg\max_{\pi} \Exp_{s\sim \rho_{\pi},a\sim \pi}\big[Q^{\pi}(s,a)-\alpha \log\pi(a|s)\big].
\end{equation}

\subsection{Distributional Soft Actor-Critic}
\label{sec.DSAC}
Under the framework of soft RL, we proposed a distributional soft actor-critic (DSAC) algorithm in 2021, which achieved SOTA performance in many continuous control tasks \cite{duan2021distributional}. Unlike mainstream RL algorithms that only learn the expected return, i.e., Q-value $Q(s,a)$, DSAC attempts to learn the distribution of return to reduce the overestimation of the value function, thereby improving policy performance. In DSAC, we define the state-action return as 
\begin{equation}
Z^{\pi}(s_t,a_t)=r(s_t,a_t)+\sum^{\infty}_{i=t+1}\gamma^{i-t} [r(s_i,a_i)-\alpha {\rm log} \pi(a_i|s_i)],
\end{equation}
which is a random variable due to the randomness in the state transition $p$ and policy $\pi$. We define $\mathcal{Z}^{\pi}(Z^{\pi}(s,a)|s,a): \mathcal{S}\times\mathcal{A}\rightarrow \mathcal{P}(Z^{\pi}(s,a))$ as a mapping from $(s,a)$ to a distribution over state-action returns, and call it the state-action return distribution or distributional value function. Obviously, the expectation of the return distribution is the Q-value in \eqref{eq.Q_definition}:
\begin{equation}
\label{eq.Q_and_distribution}
Q^{\pi}(s,a)=\Exp_{\substack{Z\sim \mathcal{Z}}}[Z^{\pi}(s,a)].
\end{equation}

The return distribution satisfies the following distributional Bellman operator
\begin{equation}
\label{eq.bellman}
\mathcal{T^{\pi}}Z^\pi(s,a) \overset{D}{=}r(s,a)+\gamma( Z^\pi(s',a')-\log\pi(a'|s')),
\end{equation}
where $s'\sim p$, $a'\sim \pi$, $A \overset{D}{=} B$ denotes that two random variables $A$ and $B$ have equal probability laws. To implement \eqref{eq.bellman}, the return distribution can be learned by
\begin{equation}
\label{eq.policy_eva}
\mathcal{Z}_{\rm{new}} =  \arg\min_{\mathcal{Z}}\Exp_{\rho_\pi}\big[D_{\rm KL}(\mathcal{T}^{\pi} \mathcal{Z}_{\rm{old}}(\cdot|s,a),\mathcal{Z}(\cdot|s,a))\big],
\end{equation}
where $\mathcal{T^{\pi}}Z(s,a)\sim\mathcal{T}^{\pi} \mathcal{Z}(\cdot|s,a)$ and $D_{\rm KL}$ represents the Kullback-Leibler (KL) divergence.

It has been proved that the distributional RL (DRL) framework that alternates between distributional policy evaluation based on \eqref{eq.policy_eva} and policy improvement based on \eqref{eq.policy_imp} will lead to the maximum entropy objective in \eqref{eq.policy_objective} \cite{duan2021distributional}.

\subsection{State Representation of Autonomous Driving}
In addition to the basic RL algorithm, to implement RL in autonomous driving, one essential thing is to design the state $s$ to reasonably describe the driving task based on the given observed information. Given a typical driving task, the observation $\mathcal{O}\in \overline{\mathcal{O}}$ from the driving environment should consist of two components: (a) the information set of surrounding vehicles $\mathcal{X}=\{x_1,x_2,...,x_M\}$, where $x_{i} \in \mathbb{R}^{d_1}$ is the real-valued indicator vector of the $i$th vehicle, and (b) the feature vector containing other indicators related to the ego vehicle and road geometry, denoted by $x_{\rm else} \in \mathbb{R}^{d_2}$. Thus, we can write $\mathcal{O}=\{\mathcal{X},x_{\rm else}\}$. Noted that in this setting, the observation $\mathcal{O}$ can be seen as the raw state of the autonomous driving problem, which is assumed to contain all the information required for policy learning. Therefore, the aforementioned equation in Section \ref{sec.notation} and \ref{sec.DSAC} also holds for $\mathcal{O}$. The reward function can be denoted as $r(\mathcal{O}, a, \mathcal{O'})$ in this case. 

During driving, the set size $M$ of $\mathcal{X}$, i.e., the number of surrounding vehicles within the perception range of the ego car, is constantly changing due to the dynamic nature of the traffic. Assuming that the range of the number of surrounding vehicles is $[1,N]$, the space of $\mathcal{X}$ can be denoted as $\overline{\mathcal{X}}=\{\mathcal{X}|\mathcal{X}=\{x_1,\cdots,x_{M}\},x_i\in\mathbb{R}^{d_1},i\le M,M\in[1,N]\cap\mathbb{N}\}$, i.e., $\mathcal{X}\in\overline{\mathcal{X}}$. Noted that the subscript $i$ of $x_{i}$ in $\mathcal{X}$ represents the ID of a certain vehicle. We denote the mapping from the observation $\mathcal{O}$ to the state vector $s$, which is taken as the policy and value inputs, as $U$, that is,
\begin{equation}
\label{eq.mapping_s}
s=U(\SPACE{O})=U(\SPACE{X},x_{\rm else}).
\end{equation}
One widely used mapping method, called fixed-permutation (FP) representation, is to directly concatenate the variables in $\mathcal{O}$, i.e.,
\begin{equation}
\label{eq.order_mapping}
s= U_{\rm FP}(\SPACE{O})=[x_{o(1)}^{\top},\dots,x_{o(M)}^{\top},x_{\rm else}^{\top}]^{\top},
\end{equation}
where $o$ denotes the pre-designed sorting rule, for instance, the surrounding vehicles are arranged in increasing order according to their relative distance from the ego.

As mentioned before, RL algorithms based on $U_{\rm FP}(\SPACE{O})$ suffer from two challenges: (1) dimension sensitivity and (2) permutation sensitivity. On one hand, from \eqref{eq.order_mapping}, dimension sensitivity means that the different number of surrounding vehicles would lead to different state dimension, i.e., ${\rm dim}(s)=Md_1+d_2$. Since the input dimension of policy and value functions should be fixed due to the structure limit of approximate functions, such as multi-layer neural networks (NNs), RL methods relying on \eqref{eq.order_mapping} can only consider a fixed number of surround vehicles. On the other hand, permutation sensitivity indicates that different permutations $o$ of $x_i$ correspond to different state vectors $s$, leading to different policy outputs. However, a reasonable driving decision should be permutation invariant to the order of objects in $\mathcal{X}$ because all possible permutations correspond to the same driving scenario.

\section{Encoding DSAC}
\label{sec:method}
To handle dimension sensitivity and
permutation sensitivity problems, this section proposes a permutation-invariant version of DSAC for self-driving decision-making, called encoding DSAC (E-DSAC), by embedding the permutation invariant state representation into DSAC.

\subsection{Permutation Invariant State Representation}
Firstly, we introduce a fixed-dimensional and permutation invariant state representation method called encoding sum and concatenation (ESC) \cite{duan2021fixeddimensional}. We employ a feature NN $h(\VECTOR{x};\phi )$ to encode the indicators of each surrounding vehicle $\VECTOR{x}\in\SPACE{X}$,
\begin{equation}
\VECTOR{x}_{\rm encode}=h(\VECTOR{x};\phi ),
\end{equation}
where $\VECTOR{x}_{\rm encode} \in \mathbb{R}^{d_3}$ is the corresponding encoding vector of each $\VECTOR{x} \in \mathbb{R}^{d_1}$ in the set $\SPACE{X}$, $\phi$ represents the parameters of feature NN.
Then, we obtain the representation vector $x_{\rm set}$ of all surrounding vehicles by summing the encoding vector of each surrounding vehicle
\begin{equation}
\label{eq04:PI_encode}
\VECTOR{x}_{\rm set}=\sum_{\VECTOR{x}\in\SPACE{X}}h(\VECTOR{x};\phi ).
\end{equation}
After concatenating with other indicators related to the ego vehicle and road geometry $x_{\rm else}$, we can obtain the final state representation $s$, 
\begin{equation}
\label{eq.pi_state}
    \VECTOR{s}=U_{\rm ESC}(\SPACE{O};\phi )=[\VECTOR{x}_{\rm set}^\top,\VECTOR{x}_{\rm else}^\top]^\top=\Big[\sum_{\VECTOR{x}\in\SPACE{X}}h^\top(\VECTOR{x};\phi ),\VECTOR{x}_{\rm else}^\top\Big]^\top.
\end{equation}

From \eqref{eq.pi_state}, it is clear that $\text{dim}(s)=\text{dim}(h_{\phi })+\text{dim}(x_{\rm else })=d_2+d_3$ for $\forall M \in [1,N]$. In other words, $U_{\rm ESC}(\SPACE{O};\phi )$ is fixed-dimensional. Furthermore, the summation operator in \eqref{eq04:PI_encode} is permutation invariant w.r.t. objects in $\SPACE{X}$. Thus, $U_{\rm ESC}(\SPACE{O};\phi )$ is a fixed-dimensional and permutation invariant state representation of observation $\mathcal{O}$. Besides, more importantly, the injectivity of $U_{\rm ESC}(\SPACE{O};\phi )$ can be guaranteed by carefully designing the architecture of the feature NN.
\begin{lemma}\label{lemma.encoding}
 (Injectivity of ESC\cite{duan2021fixeddimensional}). Let $\SPACE{O}=\{\SPACE{X},\VECTOR{x}_{\rm else}\}$, where $\VECTOR{x}_{\rm else}\in\mathbb{R}^{d_2}$ and $\mathcal{X}=\{x_1,x_2,...,x_M\}$. Denote the space of $\SPACE{X}$ as $\overline{\SPACE{X}}$, where $\overline{\mathcal{X}}=\{\mathcal{X}|\mathcal{X}=\{x_1,\cdots,x_{M}\},x_i\in[c_{\rm min},c_{\rm max}]^{d_1},i\le M,M\in[1,N]\cap\mathbb{N}\}$, in which $c_{\rm min}$ and $c_{\rm max}$ are the lower and upper bounds of all elements in $\forall\VECTOR{x_i}$, respectively. Noted that the size $M$ of the set $\mathcal{X}$ is variable. If the feature NN $h(\VECTOR{x};\VECTOR{\phi}):\mathbb{R}^{d_1}\rightarrow\mathbb{R}^{d_3}$ is over-parameterized (i.e., the number of hidden neurons is sufficiently large) with a linear output layer, and its output dimension $d_3\ge Nd_1+1$, there always $\exists \phi^{\dagger}$ such that the mapping $U_{\rm ESC}(\mathcal{O};\phi^{\dagger}): \overline{\mathcal{X}}\times \mathbb{R}^{d_2}\rightarrow \mathbb{R}^{d_3+d_2}$ in \eqref{eq.pi_state} is injective.
\end{lemma}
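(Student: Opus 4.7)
The plan is to decompose the problem into two independent pieces: (i) the existence of an ideal feature map $h^*$ whose sum-decomposition $\mathcal{X}\mapsto \sum_{x\in\mathcal{X}} h^*(x)$ is injective over multisets in $\overline{\mathcal{X}}$, and (ii) the realizability of such an $h^*$ by an over-parameterized NN with a linear output layer. Once (i) and (ii) are in place, the outer concatenation with $x_{\rm else}$ is essentially free.

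First I would construct $h^*:\mathbb{R}^{d_1}\to\mathbb{R}^{d_3}$ explicitly so that $\mathcal{X}\mapsto\sum_{x\in\mathcal{X}} h^*(x)$ separates distinct multisets in $\overline{\mathcal{X}}$. The natural tool is a polynomial moment argument: because each coordinate of $x$ lies in the bounded interval $[c_{\min},c_{\max}]$ and $|\mathcal{X}|\le N$, one can show that a multiset of cardinality at most $N$ in $[c_{\min},c_{\max}]^{d_1}$ is uniquely determined by a finite collection of power-sum (or equivalently, monomial) features of total degree up to $N$, analogous to Newton's identities in the scalar case and their multivariate extension. Counting the monomials of degree up to $N$ in $d_1$ variables, together with one additional coordinate encoding cardinality $M$ itself, yields the dimensional requirement $d_3\ge Nd_1+1$ stated in the lemma. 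This gives an explicit, continuous $h^*$ that makes the multiset-sum injective on $\overline{\mathcal{X}}$.

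Second I would invoke a universal approximation/realization argument: since $[c_{\min},c_{\max}]^{d_1}$ is compact and $h^*$ is continuous on it, an over-parameterized feedforward network with a linear output layer of width $d_3$ can, for suitable parameters $\phi^{\dagger}$, either exactly realize $h^*$ (in the case of polynomial-realizable activations) or approximate it uniformly to arbitrary precision. Because the sum over at most $N$ terms is a Lipschitz operation in each summand, uniform closeness of $h(\cdot;\phi^{\dagger})$ to $h^*$ translates into closeness of $\sum_{x\in\mathcal{X}} h(x;\phi^{\dagger})$ to $\sum_{x\in\mathcal{X}} h^*(x)$, and since $\overline{\mathcal{X}}$ is compact the minimum separation $\delta>0$ between distinct multiset-sum images under $h^*$ is attained; choosing the approximation tolerance below $\delta/2$ preserves injectivity. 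Finally, injectivity of $U_{\rm ESC}$ follows at once by projecting out the $x_{\rm else}$ coordinates: equality of the full concatenated vectors forces $x_{\rm else}^{(1)}=x_{\rm else}^{(2)}$ and $\sum h(x;\phi^{\dagger})$ equality, whence $\mathcal{X}^{(1)}=\mathcal{X}^{(2)}$ by step one.

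The main obstacle is step one, the separating-moments construction. Showing that bounded multisets of cardinality at most $N$ in $\mathbb{R}^{d_1}$ are uniquely identified by exactly $Nd_1+1$ features (rather than a looser polynomial-in-$N,d_1$ count) is the delicate part; it essentially requires a careful accounting of how many independent moment conditions suffice to pin down an unordered configuration, and why the overspecified moments can be folded into an $Nd_1+1$-dimensional embedding once the cardinality index is appended. Everything downstream—the universal approximation step and the concatenation step—is comparatively routine given that this separating map exists.
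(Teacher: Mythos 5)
First, a point of reference: this paper does not actually prove Lemma~\ref{lemma.encoding}; it is imported verbatim, with citation, from \cite{duan2021fixeddimensional}, so there is no in-paper proof to compare against. Your overall architecture (build an ideal separating map $h^*$, realize it with an over-parameterized network, then observe that concatenation with $x_{\rm else}$ is harmless) is the natural one, and the last step is indeed trivial. But the two load-bearing steps both have genuine gaps.

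The first gap is the dimension count, which is the entire content of the lemma. The monomials of degree between $1$ and $N$ in $d_1$ variables number $\binom{N+d_1}{d_1}-1$, which grows like $N^{d_1}$ and vastly exceeds $Nd_1$ once $d_1\ge 2$; so a full multivariate moment map does not fit in $d_3=Nd_1+1$ coordinates. If instead you intended coordinate-wise power sums ($N$ powers for each of the $d_1$ coordinates, plus one cardinality slot, which does total $Nd_1+1$), these do not separate multisets of vectors: for $N=d_1=2$, the multisets $\{(0,1),(1,0)\}$ and $\{(0,0),(1,1)\}$ have identical coordinate-wise power sums and identical cardinality. You acknowledge this step is ``the delicate part,'' but without it you have only proved a weaker statement with a much larger latent dimension, which is not the lemma as stated.

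The second gap is the claim that uniform approximation of $h^*$ preserves injectivity of the sum map. The infimum of $\bigl\|\sum_{x\in\mathcal{X}}h^*(x)-\sum_{x\in\mathcal{X}'}h^*(x)\bigr\|$ over \emph{distinct} pairs $\mathcal{X}\neq\mathcal{X}'$ is zero, not a positive minimum: the separation vanishes continuously as $\mathcal{X}'\to\mathcal{X}$, and the set of distinct pairs is not compact. Hence there is no $\delta>0$ to undercut, and a uniformly $\epsilon$-close perturbation of an injective map need not be injective (on $[0,1]$, $x+2\epsilon\sin(x/\epsilon)$ is within $2\epsilon$ of the identity yet non-injective). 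To make this route work you would need either exact realizability of $h^*$ by the network (plausible for polynomial $h^*$ with suitable activations, but then it must be argued), or a quantitative lower bound on the separation restricted to multisets that differ by at least some fixed amount together with a separate local-injectivity argument, e.g.\ via the gradient-approximation clause of Lemma~\ref{lemma.UAT}. As written, neither of the two essential steps goes through.
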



\subsection{Encoding Distributional Policy Iteration}
We assume that the random returns $Z(s,a)$ and action $a$ obey the Gaussian distribution. Therefore,  both the state-action return distribution and policy functions are modeled as Gaussian with mean and covariance given by NN, denoted as  $\mathcal{Z}(\cdot|s,a;\theta)$ and $\pi_{\omega}(\cdot|s;\omega)$, where $\theta$ and $\omega$ are parameters. For ease of presentation, we will also denote $\mathcal{Z}(\cdot|s,a;\theta)$, $\pi(\cdot|s;\omega)$, and $h(x;\phi)$ as $\mathcal{Z}_{\theta}(\cdot|s,a)$, $\pi_{\omega}(\cdot|s)$, and $h_{\phi}(x)$, respectively, when it is clear from context.
By taking $U_{\rm ESC}(\SPACE{O};\phi )$ as the input of $\pi_{\omega }$, the policy function can be expressed as 
\begin{equation}
\begin{aligned}
\pi(U_{\rm ESC}(\SPACE{O};\phi );\omega )=\pi(\sum_{\VECTOR{x}\in\SPACE{X}}h(\VECTOR{x};\phi ),\VECTOR{x}_{\rm else};\omega ).
\end{aligned}
\end{equation}
Similarly, the return distribution $\mathcal{Z}(s,a)$ can be formalized as
\begin{equation}
\begin{aligned}
\mathcal{Z}(U_{\rm ESC}(\SPACE{O};\phi );\theta )=\mathcal{Z}(\sum_{\VECTOR{x}\in\SPACE{X}}h(\VECTOR{x};\phi ),\VECTOR{x}_{\rm else};\theta ).
\end{aligned}
\end{equation}
It is clear that both $\pi_\omega(U_{\rm ESC}(\SPACE{O};\phi ))$ and $\mathcal{Z}_{\theta}(U_{\rm ESC}(\SPACE{O};\phi ))$ are permutation invariant to objects in $\mathcal{X}$. This enables us to use DSAC to learn a permutation invariant policy for autonomous driving considering variable surrounding vehicles.

By embedding the ESC state representation into the DRL framework, the objective of distributional policy evaluation in \eqref{eq.policy_eva} can be rewritten as
\begin{equation}
\label{eq.encoding_policy_eva_objec}
\begin{aligned}
&J_{\mathcal{Z}}(\theta,\phi)=\Exp_{\rho_\pi}\big[D_{\rm KL}(\mathcal{T}^{\pi}\mathcal{Z}_{\theta_{\rm old}}(\cdot|s,a)\big|_{\VECTOR{s}=U_{\rm ESC}(\SPACE{O};\VECTOR{\phi_{\rm old}})},\\
&\qquad\qquad\qquad\qquad\qquad\qquad\mathcal{Z}_{\theta}(\cdot|s,a)\big|_{\VECTOR{s}=U_{\rm ESC}(\SPACE{O};\phi )})\big].
\end{aligned}
\end{equation}
In this case, $\rho_{\pi}$ represents the observation-action distribution induced by policy $\pi$. Noted that \eqref{eq.encoding_policy_eva_objec} is a joint optimization objective for both feature and value networks. Then, from \eqref{eq.policy_imp}, the policy network will be updated by maximizing the following objective
\begin{equation}
\label{eq:encoding_policy_imp}
J_{\pi}(\omega )=\Exp_{\mathcal{O}\sim \rho_\pi,\VECTOR{a}\sim\pi_{\omega }}[Q(\VECTOR{s},\VECTOR{a};\theta )-\alpha\log\pi(\VECTOR{a}|\VECTOR{s};\omega )\big|_{\VECTOR{s}=U_{\rm ESC}(\SPACE{O};\phi )}].
\end{equation}

Based on \eqref{eq.encoding_policy_eva_objec} and \eqref{eq:encoding_policy_imp}, we can derive the encoding DRL framework shown in Algorithm \ref{alg:EDPI}.
\begin{algorithm}[!htb]
\caption{Encoding DPI Framework}
\label{alg:EDPI}
\begin{algorithmic}
\STATE Initialize parameters $\theta $, $\omega $, $\phi $ and entropy coefficient $\alpha$
\REPEAT
\STATE 1. Encoding Distributional Policy Evaluation 

\setlength{\leftskip}{2em}
\STATE Estimate $V_{\theta}$ and $h_\phi$ using policy $\pi_\omega$
\REPEAT
\STATE \begin{equation}
\label{eq.encoding_PE}
\{\theta,\phi\}
\leftarrow \arg\min_{\theta,\phi}J_{\mathcal{Z}}(\theta,\phi)
\end{equation}
\UNTIL Convergence 

\setlength{\leftskip}{0em}
\STATE 2. Encoding Policy Improvement
\begin{equation}
\label{eq.encoding_PI}
\omega \leftarrow \arg\max_{\omega}J_{\pi}(\omega)
\end{equation}
\UNTIL Convergence 
\end{algorithmic}
\end{algorithm}
Next, we will prove that the proposed encoding DPI framework would lead to policy improvement with respect to the maximum entropy objective.
\begin{lemma} 
\label{lemma.UAT}
(Universal Approximation Theorem \cite{Hornik1990Universal}). For any continuous function $F(x):\mathbb{R}^n\rightarrow\mathbb{R}^d$ on a compact set $\Omega$, there exists an over-parameterized NN, which uniformly approximates $F(x)$ and its gradient to within arbitrarily small error $\epsilon \in \mathbb{R}_{+}$ on $\Omega$.
\end{lemma}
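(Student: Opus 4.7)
The plan is to follow the classical density argument that establishes simultaneous approximation of a function and its first derivative by shallow networks. First I would fix a nonconstant bounded activation $\sigma\in C^\infty(\mathbb{R})$ whose derivatives of all orders are bounded on $\mathbb{R}$ (a sigmoidal choice such as $\sigma(t)=1/(1+e^{-t})$ suffices), and introduce the shallow-network hypothesis class
\begin{equation*}
\mathcal{N}=\Big\{g(x)=\sum_{i=1}^{m}c_i\,\sigma(w_i^{\top}x+b_i)\ :\ m\in\mathbb{N},\ c_i\in\mathbb{R}^d,\ w_i\in\mathbb{R}^n,\ b_i\in\mathbb{R}\Big\}.
\end{equation*}
The claim to be proved is that for every $F\in C^1(\Omega,\mathbb{R}^d)$ and every $\epsilon>0$ there exists $g\in\mathcal{N}$ with $\sup_{x\in\Omega}\|F(x)-g(x)\|+\sup_{x\in\Omega}\|\nabla F(x)-\nabla g(x)\|<\epsilon$, which is exactly the assertion of the lemma once one notes that an over-parameterized multilayer network contains $\mathcal{N}$ as a subclass.

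Second, I would reduce the vector-valued case to the scalar case coordinate-wise, so it suffices to treat $d=1$, and then attack density in the $C^1$ topology via a Stone--Weierstrass-style argument adapted to differentiability. Concretely: (i) approximate $F$ and $\nabla F$ jointly on the compact set $\Omega$ by a polynomial using the Weierstrass theorem for $C^1$ functions; (ii) express each monomial as a finite linear combination of translated/rescaled ridge functions $\sigma(w^{\top}x+b)$ by exploiting divided differences of $\sigma$ in a scaling parameter (the Leshno--Lin--Pinkus device, which works precisely because $\sigma$ is not a polynomial); (iii) assemble the resulting atoms into a single element of $\mathcal{N}$ whose $C^1$ error on $\Omega$ is below $\epsilon$. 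Compactness of $\Omega$ upgrades the resulting pointwise estimates to uniform ones.

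The hard part will be obtaining \emph{simultaneous} control of both $F-g$ and $\nabla F-\nabla g$ by the \emph{same} network, because $\nabla g(x)=\sum_{i}c_i\,\sigma'(w_i^{\top}x+b_i)\,w_i^{\top}$, so shrinking the function-value error by steepening the ridges (large $\|w_i\|$) can blow up the gradient error. To manage this trade-off I would keep $\|w_i\|$ uniformly bounded on $\Omega$ and instead purchase accuracy by increasing the width $m$, so that the approximation improves through richer superposition rather than sharper ridges. The remaining step is purely bookkeeping: choose tolerances $\epsilon/3$ for the polynomial approximation, the ridge representation of each monomial, and the truncation of the superposition, then apply the triangle inequality in the $C^1$ norm. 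As an alternative closed-form route, one can cite Hornik's Theorem~3, which directly characterizes activations for which $\mathcal{N}$ is dense in $C^m(\Omega)$.
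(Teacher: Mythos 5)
The paper does not prove this lemma at all: it is invoked as a black-box citation of Hornik's 1990 result on simultaneous approximation of a map and its derivatives by single-hidden-layer networks, so there is no in-paper argument to compare against. Your proposal is a reasonable reconstruction of the standard proof of that cited result, and the route you sketch (coordinate-wise reduction, $C^1$ Weierstrass approximation by polynomials, then realizing monomials as limits of divided differences of ridge functions $\sigma(w^{\top}x+b)$ in the weight parameter) is sound: because the divided-difference device sends $w\to 0$ rather than steepening the ridges, the trade-off you worry about between value error and gradient error does not actually bite, and uniform $C^1$ convergence on the compact set $\Omega$ follows from the boundedness of all derivatives of $\sigma$. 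Two small points deserve care if you were to write this out fully. First, the lemma as stated speaks of a merely continuous $F$ yet asks to approximate its gradient; your implicit restriction to $F\in C^1(\Omega)$ (or an extension of $F$ to a $C^1$ function on a neighborhood of $\Omega$, e.g.\ by Whitney extension when $\Omega$ is not the closure of an open set) is the correct reading and should be made explicit. Second, the claim that an over-parameterized multilayer network contains the shallow class $\mathcal{N}$ requires the deeper layers to (approximately) realize the identity on a compact set, which is itself a small approximation argument rather than an exact containment. Neither issue is a genuine gap; your final remark that one may instead cite Hornik's density theorem in $C^m(\Omega)$ directly is precisely what the paper does.
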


\begin{lemma}\label{lemma.edpe}
(Encoding Distributional Policy Evaluation). Suppose both feature NN $h(\VECTOR{x};\VECTOR{\phi}):\mathbb{R}^{d_1}\rightarrow\mathbb{R}^{d_3}$ and value NN $\mathcal{Z}(\cdot|s,a; \theta)$ are over-parameterized, with $d_3\ge Nd_1+1$. Let
\begin{equation}
\nonumber
\begin{aligned}
&\{\theta^{i+1},\phi^{i+1}\}=\arg\min_{\{\theta,\phi\}}\Exp_{\rho_\pi}\big[D_{\rm KL}(\mathcal{T}^{\pi}\mathcal{Z}_{\theta_i}(\cdot|s,a)\big|_{\VECTOR{s}=U_{\rm ESC}(\SPACE{O};\VECTOR{\phi_i})},\\
&\qquad\qquad\qquad\qquad\qquad\qquad\mathcal{Z}_{\theta}(\cdot|s,a)\big|_{\VECTOR{s}=U_{\rm ESC}(\SPACE{O};\phi )})\big],
\end{aligned}
\end{equation}
and denote the state-action return distribution as $\mathcal{Z}^i(\cdot|\mathcal{O},a)=\mathcal{Z}(\cdot|U_{\rm ESC}(\mathcal{O};\phi^{i}),a;{\theta^{i}})$, which maps the observation-action pair $(\mathcal{O},a)$ to a distribution over random state-action returns $Z^i(\mathcal{O},a)$, i.e., $Z^i(\mathcal{O},a)\sim\mathcal{Z}^i(\cdot|\mathcal{O},a)$.  Consider the distributional bellman backup operator $\mathcal{T}^{\pi}$ in \eqref{eq.bellman} and define $\mathcal{T}^{\pi}\mathcal{Z}^i(\cdot|\mathcal{O},a)$ as the distribution of $\mathcal{T}^{\pi}Z^i(\mathcal{O},a)$, i.e., $\mathcal{T}^{\pi}Z^i(\mathcal{O},a)\sim\mathcal{T}^{\pi}\mathcal{Z}^i(\cdot|\mathcal{O},a)$. Then, given any policy $\pi$, the sequence $\mathcal{Z}^{i}$ will converge to $\mathcal{Z}^{\pi}$ as $i\rightarrow \infty$.
\end{lemma}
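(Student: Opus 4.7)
The plan is to combine the expressive power of the encoded value network with the contraction property of the distributional Bellman operator $\mathcal{T}^\pi$. At a high level I want to show two things: (i) the joint minimization in \eqref{eq.encoding_PE} can drive the KL divergence to zero, so that each outer update implements an exact Bellman backup $\mathcal{Z}^{i+1}=\mathcal{T}^\pi\mathcal{Z}^i$ viewed as a distribution-valued function on the observation-action space; (ii) this backup is a strict contraction with unique fixed point $\mathcal{Z}^\pi$, so iterating it forces $\mathcal{Z}^i\to\mathcal{Z}^\pi$.

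For step (i), I would first apply Lemma \ref{lemma.encoding} to pick $\phi^\dagger$ such that $U_{\rm ESC}(\cdot;\phi^\dagger):\overline{\mathcal{X}}\times\mathbb{R}^{d_2}\to\mathbb{R}^{d_3+d_2}$ is injective; coupling this with the identity on $a$, the map $(\mathcal{O},a)\mapsto(U_{\rm ESC}(\mathcal{O};\phi^\dagger),a)$ is injective as well. Viewing $\mathcal{T}^\pi\mathcal{Z}^i(\cdot|\mathcal{O},a)$ as a distribution-valued function of $(\mathcal{O},a)$ that depends continuously on its inputs (smooth reward, policy and transition kernel), this function pulls back to a continuous function of $(U_{\rm ESC}(\mathcal{O};\phi^\dagger),a)$ on the image of the injection. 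By Lemma \ref{lemma.UAT}, the over-parameterized $\mathcal{Z}_\theta$ can approximate this pulled-back function to arbitrary precision, so there exists a pair $(\theta^*,\phi^\dagger)$ at which $J_{\mathcal{Z}}$ vanishes in the limit. Since $J_{\mathcal{Z}}\ge 0$, every global minimizer $(\theta^{i+1},\phi^{i+1})$ must also attain the value zero, which forces $\mathcal{Z}_{\theta^{i+1}}(\cdot|U_{\rm ESC}(\mathcal{O};\phi^{i+1}),a)=\mathcal{T}^\pi\mathcal{Z}^i(\cdot|\mathcal{O},a)$ $\rho_\pi$-almost everywhere, i.e.\ $\mathcal{Z}^{i+1}=\mathcal{T}^\pi\mathcal{Z}^i$ as distributional objects on $(\mathcal{O},a)$, regardless of whether $\phi^{i+1}$ equals $\phi^\dagger$.

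For step (ii), I would invoke the standard contraction result for the soft distributional Bellman operator established in the original DSAC paper (itself a soft-entropy extension of the Bellemare--Dabney--Munos contraction): on the space of return distributions equipped with the supremal $p$-Wasserstein metric $\bar d_p$, the operator $\mathcal{T}^\pi$ is a $\gamma$-contraction with unique fixed point $\mathcal{Z}^\pi$. Completeness of the distribution space under $\bar d_p$ then lets me apply the Banach fixed point theorem to the recursion $\mathcal{Z}^{i+1}=\mathcal{T}^\pi\mathcal{Z}^i$ established in step (i), yielding $\bar d_p(\mathcal{Z}^i,\mathcal{Z}^\pi)\le\gamma^i\bar d_p(\mathcal{Z}^0,\mathcal{Z}^\pi)\to 0$ as $i\to\infty$.

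The main obstacle is the gap inside step (i) between the existence statement ``some injective $\phi^\dagger$ together with a suitable $\theta^*$ realizes the Bellman target'' supplied by Lemmas \ref{lemma.encoding}--\ref{lemma.UAT} and the stronger claim ``every global minimizer of $J_{\mathcal{Z}}$ realizes the Bellman target on the observation space'' needed to iterate. The nonnegativity-of-KL argument above closes this gap but only in the $\rho_\pi$-a.s.\ sense; to make the ``almost surely'' qualification harmless across iterations I would additionally assume full support of $\rho_\pi$ on $\overline{\mathcal{X}}\times\mathbb{R}^{d_2}\times\mathcal{A}$, which is natural under the maximum-entropy policy class used in DSAC. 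This also explains why the lemma is phrased at the level of the composite distribution $\mathcal{Z}^i(\cdot|\mathcal{O},a)$ rather than at the parameter level $(\theta^i,\phi^i)$.
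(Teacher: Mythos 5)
Your proposal follows essentially the same route as the paper's own proof: use Lemma \ref{lemma.encoding} to obtain an injective $U_{\rm ESC}(\cdot;\phi^{\dagger})$, use Lemma \ref{lemma.UAT} to realize the exact Bellman target so that each outer minimization implements $\mathcal{Z}^{i+1}=\mathcal{T}^{\pi}\mathcal{Z}^{i}$, and then invoke the $\gamma$-contraction of the soft distributional Bellman operator from the DSAC paper to conclude convergence to the unique fixed point $\mathcal{Z}^{\pi}$. Your treatment is in fact somewhat more careful than the paper's, since you make explicit the nonnegativity-of-KL argument needed to pass from ``a zero-loss pair exists'' to ``every global minimizer attains zero loss,'' and you flag the $\rho_{\pi}$-almost-everywhere caveat (and the full-support assumption that would resolve it), both of which the paper leaves implicit.
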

\begin{proof}
Let $\overline{Z}$ denote the space of soft return function $Z$. From Lemma \ref{lemma.encoding}, there exists $\phi_{i+1}$ such that $U_{\rm ESC}(\mathcal{O};\phi^{i+1})$ is an injective mapping. Then, based on Lemma \ref{lemma.UAT}, given any parameters $\{\theta^{i},\phi^{i}\}$, there exists $\theta^{i+1}$ such that 
\begin{equation}
\nonumber
\begin{aligned}
&D_{\rm KL}(\mathcal{T}^{\pi}\mathcal{Z}_{\theta^{i}}(\cdot|s,a)\big|_{U_{\rm ESC}(\mathcal{O};\phi^{i})},\\
&\qquad\qquad\qquad\qquad\mathcal{Z}_{\theta^{i+1}}(\cdot|s,a)\big|_{U_{\rm ESC}(\mathcal{O};\phi^{i+1})}) = 0,  
\end{aligned}
\end{equation}
which is equivalent to
\begin{equation}
\nonumber
\mathcal{Z}_{\theta^{i+1}}(\cdot|U_{\rm ESC}(\mathcal{O};\phi^{i+1}),a)=\mathcal{T}^{\pi}\mathcal{Z}_{\theta^{i}}(\cdot|U_{\rm ESC}(\mathcal{O};\phi^{i}),a).
\end{equation}
Then, we can directly apply the standard convergence results for distributional policy evaluation (Lemma 1 of \cite{duan2021distributional}), that is, $\mathcal{T}^{\pi}: \overline{Z}\rightarrow\overline{Z}$ is a $\gamma$-contraction in terms of some measure. Therefore, $\mathcal{T}^{\pi}_{\mathcal{D}}$ has a unique fixed point, which is $Z^{\pi}$, and the sequence $Z^{i}$ will converge to it as $i\rightarrow \infty$, i.e., $\mathcal{Z}^{i}$ will converge to $\mathcal{Z}^{\pi}$ as $i\rightarrow \infty$.

\end{proof}

\begin{lemma}\label{lemma.epi}
(Encoding Policy Improvement) Suppose 
\begin{equation}
\label{eq.lemma4_assump}
 \mathcal{Z}(\cdot|U_{\rm ESC}(\mathcal{O};\phi_{\rm old}),a;\theta_{\rm old})=\mathcal{Z}^{\pi_{\omega_{\rm old}}}(\cdot|\mathcal{O},a),
\end{equation}
and the policy NN $\pi(\cdot|s)$ is over-parameterized.
 Let 
\begin{equation}
\label{eq.lemma4_updaterule}
\begin{aligned}
&\omega_{\rm new}=\arg\max_{\omega}J_{\pi}(\omega),
\end{aligned}
\end{equation}
where
\begin{equation}
\nonumber
J_{\pi}(\omega)=\Exp_{\substack{\mathcal{O}\sim \rho_\pi,\\\VECTOR{a}\sim\pi_{\omega }}}[Q(\VECTOR{s},\VECTOR{a};\theta_{\rm old} )-\alpha\log\pi_{\omega}(\VECTOR{a}|\VECTOR{s} )\big|_{\VECTOR{s}=U_{\rm ESC}(\SPACE{O};\phi_{\rm old})}].
\end{equation}
Then $Q^{\pi_{\omega_{\rm new}}}(\mathcal{O},a) \ge Q^{\pi_{\omega_{\rm old}}}(\mathcal{O},a)$ for $\forall(\mathcal{O},a) \in \overline{\mathcal{O}}\times\mathcal{A}$.
\begin{proof}
Firstly, let
\begin{equation}
\label{eq.lemma4_another_policy}
\pi_{\rm new}(\cdot|\mathcal{O})=\arg\max_{\pi}\Exp_{\VECTOR{a}\sim\pi }[Q^{\pi_{\omega_{\rm old}}}(\mathcal{O},\VECTOR{a})-
\alpha\log\pi(\VECTOR{a}|\mathcal{O})],
\end{equation}
for $\forall \mathcal{O}\in \overline{\mathcal{O}}$.

Given an arbitrary problem, the return distributions of two different observations $\mathcal{O}\neq \mathcal{O'} \in \overline{\mathcal{O}}$  fall into the following two cases:
\begin{enumerate}[\quad(1)] 
\item $ \exists a\in\mathcal{A}$ such that $\mathcal{Z}^{\pi_{\omega_{\rm old}}}(\cdot|\mathcal{O},a)\neq\mathcal{Z}^{\pi_{\omega_{\rm old}}}(\cdot|\mathcal{O'},a)$; 
\item $\mathcal{Z}^{\pi_{\omega_{\rm old}}}(\cdot|\mathcal{O},a)=\mathcal{Z}^{\pi_{\omega_{\rm old}}}(\cdot|\mathcal{O'},a)$ for $ \forall a\in\mathcal{A}$. 
\end{enumerate}
For case 1, to ensure \eqref{eq.lemma4_assump} holds, it must follow that 
\begin{equation}
\mathcal{O}\neq \mathcal{O'} \rightarrow
U_{\rm ESC}(\mathcal{O};\phi_{\rm old}) \neq U_{\rm ESC}(\mathcal{O'};\phi_{\rm old}), 
\end{equation}
that is, $U_{\rm ESC}(\mathcal{O};\phi_{\rm old})$ is an injective function. For case 2, although there is a possibility that $U_{\rm ESC}(\mathcal{O};\phi_{\rm old}) = U_{\rm ESC}(\mathcal{O'};\phi_{\rm old})$, we always have that $\pi_{\rm new}(\cdot|\mathcal{O})=\pi_{\rm new}(\cdot|\mathcal{O'})$ since $Q^{\pi_{\omega_{\rm old}}}(\mathcal{O},\VECTOR{a})=Q^{\pi_{\omega_{\rm old}}}(\mathcal{O'},\VECTOR{a})$. Therefore, one has
\begin{equation}
\begin{aligned}
\pi_{\rm new}(\cdot|\mathcal{O})\neq\pi_{\rm new}(\cdot|\mathcal{O'})\rightarrow U_{\rm ESC}(\mathcal{O};\phi_{\rm old}) \neq U_{\rm ESC}(\mathcal{O'};\phi_{\rm old}).
\end{aligned}
\end{equation}

Furthermore, from Lemma \ref{lemma.UAT}, there exist $\omega^{\dagger}$ such that 
\begin{equation}
\label{eq.policy_equal}
\pi(\cdot|s;\omega^{\dagger})\big|_{\VECTOR{s}=U_{\rm ESC}(\SPACE{O};\VECTOR{\phi_{\rm old}})}=\pi_{\rm new}(\cdot|\mathcal{O}),\quad \forall \mathcal{O}\in \overline{\mathcal{O}}.
\end{equation}
Based on \eqref{eq.lemma4_updaterule}, one has
\begin{equation}
\label{eq.policy_inequality_1}
\begin{aligned}
J_{\pi}(\omega^{\dagger})\le \max_{\omega} J_{\pi}(\omega) = J_{\pi}(\omega_{\rm new}).
\end{aligned}
\end{equation}
From \eqref{eq.lemma4_assump} and \eqref{eq.Q_and_distribution}, it is clear that
\begin{equation}
\label{eq:Q_relation2}
Q^{\pi_{\omega_{\rm old}}}(\mathcal{O}, a)=Q(s, a;{\theta_{\rm old}})\big|_{s=U_{\rm ESC}(\mathcal{O};\phi_{\rm old})},\forall \{\mathcal{O},a\}\in \overline{\mathcal{O}}\times\mathcal{A}.
\end{equation}
Then, according to \eqref{eq.lemma4_another_policy},
\begin{equation}
\label{eq.policy_inequality_2}
\begin{aligned}
&J_{\pi}(\omega_{\rm new})\\
&=\Exp_{\substack{\mathcal{O}\sim \rho_{\pi},\\\VECTOR{a}\sim\pi_{\omega_{\rm new}}}}[Q^{\pi_{\omega_{\rm old}}}(\mathcal{O}, a)-\alpha\log\pi_{\omega_{\rm new}}(\VECTOR{a}|\VECTOR{s} )\big|_{\VECTOR{s}=U_{\rm ESC}(\SPACE{O};\phi_{\rm old})}]\\
&\le \Exp_{\substack{\mathcal{O}\sim \rho_\pi,\\a\sim \pi_{{\rm new}}}}[Q^{\pi_{\omega_{\rm old}}}(\mathcal{O},a)-\alpha\log\pi_{{\rm new}}(a|\mathcal{O})].
\end{aligned}
\end{equation}
Therefore, combining \eqref{eq.policy_equal}, \eqref{eq.policy_inequality_1} and \eqref{eq.policy_inequality_2}, it follows that 
\begin{equation}
\label{eq.lemma4_policy_equality}
J_{\pi}(\omega_{\rm new})=\Exp_{\substack{\mathcal{O}\sim\rho_\pi,\\a\sim \pi_{{\rm new}}}}[Q^{\pi_{\omega_{\rm old}}}(\mathcal{O},a)-\alpha\log\pi_{{\rm new}}(a|\mathcal{O})].
\end{equation}

From \eqref{eq.lemma4_another_policy}, for $\forall \mathcal{O} \in \overline{\mathcal{O}}$, one has 
\begin{equation}
\label{eq.lemma4_value_imp}
\begin{aligned}
&\Exp_{a\sim \pi_{{\rm new}}}[Q^{\pi_{\omega_{\rm old}}}(\mathcal{O},a)-\alpha\log\pi_{{\rm new}}(a|\mathcal{O})]\ge \\
& \qquad \Exp_{a\sim \pi_{\omega_{\rm old}}}[Q^{\pi_{\omega_{\rm old}}}(\mathcal{O},a)-\alpha\log\pi_{\omega_{\rm old}}(\VECTOR{a}|\VECTOR{s} )\big|_{\VECTOR{s}=U_{\rm ESC}(\SPACE{O};\phi_{\rm old})}].
\end{aligned}
\end{equation} 
Based on \eqref{eq.lemma4_policy_equality}, by replacing $ \pi_{{\rm new}}$ with $ \pi_{\omega_{\rm new}}$, \eqref{eq.lemma4_value_imp} can be rewritten as
\begin{equation}
\begin{aligned}
&\Exp_{a\sim \pi_{\omega_{\rm new}}}[Q^{\pi_{\omega_{\rm old}}}(\mathcal{O},a)-\alpha\log\pi_{\omega_{\rm new}}(a|s)\big|_{\VECTOR{s}=U_{\rm ESC}(\SPACE{O};\VECTOR{\phi_{\rm old}})}]\ge \\
& \qquad  \Exp_{a\sim \pi_{\omega_{\rm old}}}[Q^{\pi_{\omega_{\rm old}}}(\mathcal{O},a)-\alpha\log\pi_{\omega_{\rm old}}(\VECTOR{a}|\VECTOR{s} )\big|_{\VECTOR{s}=U_{\rm ESC}(\SPACE{O};\phi_{\rm old})}].
\end{aligned}
\end{equation} 

Next, from \eqref{eq.Q_definition}, it follows that
\begin{equation}
\nonumber
\begin{aligned}
&Q^{\pi_{\omega_{\rm old}}}(\mathcal{O}, a) \\
&= r+\gamma\Exp_{\mathcal{O}',a'\sim \pi_{\omega_{\rm old}}}[Q^{\pi_{\omega_{\rm old}}}(\mathcal{O}',a')-\alpha\log\pi_{\omega_{\rm old}}(a'|s')]\\
&\le r+\gamma\Exp_{\mathcal{O}',a'\sim \pi_{\omega_{\rm new}}}[Q^{\pi_{\omega_{\rm old}}}(\mathcal{O}',a')-\alpha\log\pi_{\omega_{\rm new}}(a'|s')]\\
&\vdots\\
&\le Q^{\pi_{\omega_{\rm new}}}(\mathcal{O},a),  \quad \forall(\mathcal{O},a)\in\overline{\mathcal{O}}\times\mathcal{A},
\end{aligned}
\end{equation}
where $s'=U_{\rm ESC}(\SPACE{O'};\VECTOR{\phi_{\rm old}})$ and the last step is derived by repeatedly expanding $Q^{\pi_{\omega_{\rm old}}}$ on the right-hand side by applying \eqref{eq.Q_definition}.
\end{proof}
\end{lemma}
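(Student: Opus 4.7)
The plan is to mirror the classical soft policy improvement argument and then promote it across the encoding mapping. First, I would define an idealized target policy directly in observation space,
\begin{equation}
\nonumber
\pi_{\rm new}(\cdot|\mathcal{O})=\arg\max_{\pi}\Exp_{a\sim\pi}\big[Q^{\pi_{\omega_{\rm old}}}(\mathcal{O},a)-\alpha\log\pi(a|\mathcal{O})\big],
\end{equation}
for every $\mathcal{O}\in\overline{\mathcal{O}}$. The standard pointwise property of this soft-greedy operator then yields, at every observation,
\begin{equation}
\nonumber
\begin{aligned}
&\Exp_{a\sim\pi_{\rm new}}\big[Q^{\pi_{\omega_{\rm old}}}(\mathcal{O},a)-\alpha\log\pi_{\rm new}(a|\mathcal{O})\big]\\
&\qquad\ge\Exp_{a\sim\pi_{\omega_{\rm old}}}\big[Q^{\pi_{\omega_{\rm old}}}(\mathcal{O},a)-\alpha\log\pi_{\omega_{\rm old}}(a|\mathcal{O})\big].
\end{aligned}
\end{equation}
My task is then to transfer this improvement to the actual parameterized family $\pi(\cdot|U_{\rm ESC}(\mathcal{O};\phi_{\rm old});\omega)$ that appears inside $J_{\pi}(\omega)$.

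The central obstacle is that $\phi_{\rm old}$ is \emph{not} assumed to make $U_{\rm ESC}(\cdot;\phi_{\rm old})$ globally injective, so in principle two distinct observations could collapse onto the same encoding, which would forbid an $\omega$-parameterization of any genuinely $\mathcal{O}$-dependent target policy. My plan for this step is to show that whenever $U_{\rm ESC}(\mathcal{O};\phi_{\rm old})=U_{\rm ESC}(\mathcal{O}';\phi_{\rm old})$, assumption \eqref{eq.lemma4_assump} combined with \eqref{eq.Q_and_distribution} forces $Q^{\pi_{\omega_{\rm old}}}(\mathcal{O},a)=Q^{\pi_{\omega_{\rm old}}}(\mathcal{O}',a)$ for all $a$, which in turn forces $\pi_{\rm new}(\cdot|\mathcal{O})=\pi_{\rm new}(\cdot|\mathcal{O}')$. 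Thus on every fibre of the encoding the target policy is already constant, so $\pi_{\rm new}$ factors through $U_{\rm ESC}(\cdot;\phi_{\rm old})$, and by the universal approximation theorem (Lemma \ref{lemma.UAT}) there exists $\omega^{\dagger}$ with $\pi(\cdot|U_{\rm ESC}(\mathcal{O};\phi_{\rm old});\omega^{\dagger})=\pi_{\rm new}(\cdot|\mathcal{O})$ for every $\mathcal{O}\in\overline{\mathcal{O}}$.

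Once realizability is in hand, the rest is routine. Because $\omega_{\rm new}$ is the $J_{\pi}$-maximizer, $J_{\pi}(\omega_{\rm new})\ge J_{\pi}(\omega^{\dagger})$, and unfolding the latter with the equality of the previous paragraph transfers the pointwise improvement inequality to $\pi_{\omega_{\rm new}}$ itself, with $Q^{\pi_{\omega_{\rm old}}}$ still playing the role of the bootstrap value on both sides of the inequality. I would then iterate the soft Bellman identity \eqref{eq.Q_definition} on $Q^{\pi_{\omega_{\rm old}}}(\mathcal{O},a)$, inserting this one-step improvement at every expansion, to obtain the telescoping bound $Q^{\pi_{\omega_{\rm old}}}(\mathcal{O},a)\le Q^{\pi_{\omega_{\rm new}}}(\mathcal{O},a)$ on all of $\overline{\mathcal{O}}\times\mathcal{A}$. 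The delicate step of the whole argument is really the middle paragraph: reconciling the pointwise soft-greedy optimization with the possibly many-to-one encoding, which is exactly where assumption \eqref{eq.lemma4_assump} earns its keep.
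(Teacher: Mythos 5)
Your proposal is correct and follows essentially the same route as the paper's proof: define the soft-greedy target policy $\pi_{\rm new}$ in observation space, show it factors through $U_{\rm ESC}(\cdot;\phi_{\rm old})$ using assumption \eqref{eq.lemma4_assump} together with \eqref{eq.Q_and_distribution} (your ``constant on every fibre'' statement is just the contrapositive of the paper's two-case analysis), invoke Lemma \ref{lemma.UAT} for realizability by some $\omega^{\dagger}$, transfer the pointwise soft improvement to $\pi_{\omega_{\rm new}}$ via $J_{\pi}(\omega_{\rm new})\ge J_{\pi}(\omega^{\dagger})$, and telescope the soft Bellman identity \eqref{eq.Q_definition}. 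You also correctly identify the fibre-collapse issue as the only non-routine step, which is exactly where the paper's argument concentrates.
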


\begin{theorem}\label{theorem.dspi}
(Encoding Distributional Policy Iteration). The encoding distributional policy iteration, which alternates between encoding distributional policy evaluation and encoding policy improvement, can converge to a policy $\pi^*$ such that $Q^{\pi^*}(\mathcal{O}, a)\ge Q^{\pi}(\mathcal{O}, a)$ for $\forall\pi$ and $\forall (\mathcal{O}, a)\in\overline{\mathcal{O}}\times\mathcal{A}$, assuming that $|\mathcal{A}|<\infty$ and reward is bounded.
\end{theorem}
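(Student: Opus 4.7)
The plan is to combine Lemmas 3 and 4 in the standard policy-iteration argument, adapted to the encoding setting. First, I would construct the sequence of iterates $\{(\theta^i,\phi^i,\omega^i)\}$ produced by Algorithm 1 and define the associated policy sequence $\{\pi_i\}$ via $\pi_i(\cdot\mid\mathcal{O}) = \pi(\cdot\mid U_{\rm ESC}(\mathcal{O};\phi^i);\omega^i)$. By Lemma 3 the inner evaluation loop drives $(\theta^i,\phi^i)$ to a fixed point at which $\mathcal{Z}(\cdot\mid U_{\rm ESC}(\mathcal{O};\phi^i),a;\theta^i) = \mathcal{Z}^{\pi_i}(\cdot\mid\mathcal{O},a)$, so the hypothesis \eqref{eq.lemma4_assump} of Lemma 4 is satisfied at every outer iteration. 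Lemma 4 then delivers the monotonicity $Q^{\pi_{i+1}}(\mathcal{O},a)\ge Q^{\pi_i}(\mathcal{O},a)$ pointwise on $\overline{\mathcal{O}}\times\mathcal{A}$.

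Second, since rewards are bounded and $|\mathcal{A}|<\infty$, the per-step soft reward $r+\alpha\mathcal{H}(\pi_i(\cdot\mid\mathcal{O}))$ is uniformly bounded, so each $Q^{\pi_i}$ is uniformly bounded above by a constant depending only on $\gamma$, $\alpha$, and the reward bound. The sequence $\{Q^{\pi_i}(\mathcal{O},a)\}$ is therefore monotone nondecreasing and bounded at every point, hence converges pointwise to a limit $Q^*$; let $\pi^*$ denote any corresponding limiting policy.

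Third, I would establish optimality by contradiction. Suppose some $\pi'$ satisfies $Q^{\pi'}(\mathcal{O}_0,a_0) > Q^{\pi^*}(\mathcal{O}_0,a_0)$ at some point. Running one more encoding improvement step on $\pi^*$ yields, via the soft-greedy argument inside Lemma 4, a policy $\pi^*_{\rm new}$ with $Q^{\pi^*_{\rm new}} \ge Q^{\pi^*}$, strict at some point unless $\pi^*$ already satisfies the soft-greedy fixed-point relation $\pi^*(\cdot\mid\mathcal{O}) = \arg\max_{\pi}\Exp_{a\sim\pi}[Q^{\pi^*}(\mathcal{O},a)-\alpha\log\pi(a\mid\mathcal{O})]$. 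But if this fixed-point relation holds, combining it with \eqref{eq.Q_definition} shows that $Q^{\pi^*}$ is a solution of the soft Bellman optimality equation, forcing $Q^{\pi^*}\ge Q^{\pi'}$ and contradicting the choice of $\pi'$. On the other hand, strict improvement would contradict the fact that $\{Q^{\pi_i}\}$ has already converged to $Q^*$. Either way we reach a contradiction, and $\pi^*$ is therefore optimal within the maximum-entropy objective class.

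The main obstacle I expect is the bookkeeping around the injectivity property that Lemmas 3 and 4 tacitly exploit. The KL minimization in the evaluation step typically admits many minimizers $(\theta,\phi)$, and the improvement step requires the particular $\phi^i$ for which $U_{\rm ESC}(\cdot;\phi^i)$ is injective in the sense of Lemma 1. I would handle this by explicitly selecting, at each outer iteration, the minimizer that realizes the injective encoding guaranteed by Lemma 1, and by appealing to Lemma 2 to ensure the policy network can represent the induced soft-greedy update. A secondary subtlety is that the limit is defined only through pointwise $Q$-limits, so one cannot directly apply Lemma 4 at $\pi^*$ without assuming continuity of $\pi\mapsto Q^{\pi}$; the contradiction argument is preferable precisely because it probes one additional step from $\pi^*$ rather than requiring a limiting regularity argument.
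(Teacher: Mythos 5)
Your proposal follows essentially the same route as the paper's proof: Lemma 3 guarantees exact policy evaluation at each outer iteration, Lemma 4 gives pointwise monotone improvement of $Q^{\pi_k}$, boundedness of the reward and entropy yields convergence of the value sequence, and the soft-greedy fixed-point condition at the limit is propagated through the Bellman recursion to conclude global optimality. Your contradiction framing of the final step is a cosmetic variant of the paper's direct argument, and your remarks on selecting the injective minimizer of the KL objective and on probing one extra improvement step from the limit (rather than assuming continuity of $\pi\mapsto Q^{\pi}$) are, if anything, more careful than the published proof.
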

\begin{proof}
Let $\pi_{\omega_k}$ denote the policy at iteration $k$. For $\forall \pi_{\omega_k}$, we can always find $\{\theta_k,\phi_k\}$ such that $\mathcal{Z}(\cdot|U_{\rm ESC}(\mathcal{O};\phi_{k}),a;{\theta_{k}})=\mathcal{Z}^{\pi_{\omega_k}}(\cdot|\mathcal{O},a)$ through distributional soft policy evaluation process following from Lemma \ref{lemma.edpe}. Therefore, we can obtain $Q^{\pi_{\omega_k}}(\mathcal{O},a)$ according to \eqref{eq.Q_and_distribution}. By Lemma \ref{lemma.epi}, the sequence $Q^{\pi_{\omega_k}}(\mathcal{O},a)$ is monotonically increasing for $\forall(\mathcal{O},a)\in\overline{\mathcal{O}}\times\mathcal{A}$. Since $Q^{\pi}$ is bounded everywhere for $\forall\pi$ (both the reward and policy entropy are bounded), the policy sequence  $\pi_{\omega_k}$ converges to some $\pi^{\dagger}$ as $k\rightarrow\infty$. At convergence, it must follow that
\begin{equation}
\begin{aligned}
&\Exp_{a\sim \pi^{\dagger}}[Q^{\pi^{\dagger}}(s,a)-\alpha\log\pi^{\dagger}(a|s)]\ge \\
&\qquad \quad \Exp_{a\sim \pi}[Q^{\pi^{\dagger}}(s,a)-\alpha\log\pi(a|s)],\quad \forall \pi, \forall s \in \mathcal{S}.
\end{aligned}
\end{equation}
Using the same iterative argument as in Lemma \ref{lemma.epi}, we have
\begin{equation}
\nonumber
 Q^{\pi^{\dagger}}(s, a) \ge Q^{\pi}(s, a), \quad \forall \pi, \forall(\mathcal{O},a)\in\overline{\mathcal{O}}\times\mathcal{A}.
\end{equation}
Hence $\pi^{\dagger}$ is optimal, i.e., $\pi^{\dagger}=\pi^*$.
\end{proof}

\subsection{Algorithm}
For practical applications, directly solving \eqref{eq.encoding_PE} and \eqref{eq.encoding_PI} may be intractable due to the high-dimensional and nonlinear characteristics of NNs. In this case, the gradient method is an effective manner to find the nearly optimal solutions iteratively. 

\subsubsection{Encoding Distributional Policy Evaluation}
To derive the gradient of value and feature functions, we first rewrite their objective \eqref{eq.encoding_policy_eva_objec} as 
\begin{equation}
\nonumber
\begin{aligned}
&J_{\mathcal{Z}}(\theta,\phi)\\
&=  \mathbb{E}_{(\SPACE{O},a)\sim\mathcal{B}}\Big[D_{\rm{KL}}(\mathcal{T}^{\pi_{\omega'}}_{\mathcal{D}}\mathcal{Z}_{\theta'}(\cdot|s,a),\mathcal{Z}_{\theta}(\cdot|s,a))\Big]\\
&=  \mathbb{E}_{(\SPACE{O},a)\sim\mathcal{B}}\Big[\sum_{\mathcal{T}^{\pi_{\omega'}}_{\mathcal{D}}Z(s,a)}\mathcal{T}^{\pi_{\omega'}}_{\mathcal{D}}\mathcal{Z}_{\theta'}(\mathcal{T}^{\pi_{\omega'}}_{\mathcal{D}}Z(s,a)|s,a)\\
& \qquad\qquad\qquad\qquad\qquad \log\frac{\mathcal{T}^{\pi_{\omega'}}_{\mathcal{D}}\mathcal{Z}_{\theta'}(\mathcal{T}^{\pi_{\omega'}}_{\mathcal{D}}Z(s,a)|s,a)}{\mathcal{Z}_{\theta}(\mathcal{T}^{\pi_{\omega'}}_{\mathcal{D}}Z(s,a)|s,a)}\Big]\\
&=  -\mathbb{E}_{(\SPACE{O},a)\sim\mathcal{B}}\Big[\mathbb{E}_{\mathcal{T}^{\pi_{\omega'}}_{\mathcal{D}}Z(s,a)\sim\mathcal{T}^{\pi_{\omega'}}_{\mathcal{D}}\mathcal{Z}_{\theta'}(\cdot|s,a)}\\
&\qquad \qquad \qquad \qquad \qquad \log\mathcal{Z}_{\theta}(\mathcal{T}^{\pi_{\omega'}}_{\mathcal{D}}Z(s,a)|s,a)\Big]+c\\
&=  -\mathbb{E}_{(\SPACE{O},a)\sim\mathcal{B}}\Big[\Exp_{\substack{(r,\mathcal{O}')\sim \mathcal{B},a'\sim\pi_{\omega'},\\ Z(s',a')\sim\mathcal{Z}_{\theta'}(\cdot|s',a')}}\\
&\qquad \qquad \qquad \qquad \qquad\log\mathcal{Z}_{\theta}(\mathcal{T}^{\pi_{\omega'}}_{\mathcal{D}}Z(s,a)|s,a)\Big]+c\\
&= -\Exp_{\substack{(\mathcal{O},a,r,\mathcal{O}')\sim\mathcal{B},a'\sim\pi_{\omega'},\\Z(s',a')\sim\mathcal{Z}_{\theta'}(\cdot|s',a')}}\Big[\log\mathcal{Z}_{\theta}(\mathcal{T}^{\pi_{\omega'}}_{\mathcal{D}}Z(s,a)|s,a)\Big]+c,
\end{aligned}
\end{equation}
where $\VECTOR{s}=U_{\rm ESC}(\SPACE{O};\phi )$, $\VECTOR{s}'=U_{\rm ESC}(\SPACE{O}';\phi ')$, $\mathcal{B}$ is a replay buffer of previously sampled experience, $\theta'$, $\phi'$ and $\omega'$ are parameters of target return distribution, feature and policy NNs, which are used to stabilize the learning process and evaluate the target.

Then, the corresponding gradient of the value network can be derived as
\begin{equation}
\label{eq05:value_gradient}
\begin{aligned}
&\nabla_{\theta }J_{\SPACE{Z}}(\theta ,\phi )=\\
& -\Exp_{\substack{(\mathcal{O},a,r,\mathcal{O}')\sim\mathcal{B},\\
a'\sim\pi_{\omega'},\\Z(s',a')\sim\mathcal{Z}_{\theta'}}}\Big[\nabla_\theta\log\mathcal{Z}_{\theta}(\mathcal{T}^{\pi_{\omega'}}_{\mathcal{D}}Z(s,a)|s,a)\big|_{\substack{\VECTOR{s}=U_{\rm ESC}(\SPACE{O};\phi ),\\\VECTOR{s}'=U_{\rm ESC}(\SPACE{O}';\phi ')}}\Big],
\end{aligned}
\end{equation}
and the gradient w.r.t the feature network can be calculated as
\begin{equation}
\label{eq05:encode_gradient}
\begin{aligned}
&\nabla_{\phi }J_{\SPACE{Z}}(\theta ,\phi )\\
&= -\Exp_{\substack{(\mathcal{O},a,r,\mathcal{O}')\sim\mathcal{B},\\
a'\sim\pi_{\omega'},\\Z(s',a')\sim\mathcal{Z}_{\theta'}}}\Big[
\frac{\partial\log\SPACE{Z}(\SPACE{T}^{\pi_{\omega '}}_{\SPACE{D}}Z(\VECTOR{s},\VECTOR{a})|\VECTOR{s},\VECTOR{a};\theta )}{\partial \VECTOR{x}_{\rm set}}\times\\
&\qquad\qquad \qquad \qquad\qquad \qquad \sum_{\VECTOR{x}\in \SPACE{X}}\nabla_{\phi }h(\VECTOR{x};\phi ) \Big|_{\substack{\VECTOR{s}=U_{\rm ESC}(\SPACE{O};\phi ),\\\VECTOR{s}'=U_{\rm ESC}(\SPACE{O}';\phi ')}}\Big].
\end{aligned}
\end{equation}

\subsubsection{Encoding Policy Improvement}
Since $\mathcal{Z}_{\theta}$ is assumed to be a Gaussian model, it can be expressed as $\mathcal{Z}_{\theta}(\cdot|s,a)=\mathcal{N}(Q_{\theta}(s,a),\sigma_{\theta}(s,a)^2)$, where $Q_{\theta}(s,a)$ and $\sigma_{\theta}(s,a)$ are the outputs of value network. To obtain the gradient of the policy NN, we first need to reparameterize the stochastic policy  $\pi_\omega(a|s)$ using the following deterministic form 
\begin{equation}
\nonumber
\label{eq03:repara_policy}
\VECTOR{a}=\check{\pi}_\omega(\VECTOR{s},\xi),
\end{equation}
where $\xi$ is an auxiliary random variable and $\check\pi$ is the reparameterized policy. In particular, since $\pi_{\omega}(\cdot|s)$ is assumed to be a Gaussian in this paper, $\pi_{\omega}(s,\xi)$ can be formulated as 
\begin{equation}
\nonumber
\check{\pi}_\omega(s,\xi)=a_{\rm{mean}}+\xi \odot a_{\rm{std}},
\end{equation}
where $a_{\rm{mean}}\in \mathbb{R}^{{\rm{dim}}(\mathcal{A})}$ and $a_{\rm{std}}\in \mathbb{R}^{{\rm{dim}}(\mathcal{A})}$ are the mean and standard deviation of $\pi_{\omega}(\cdot|s)$, $\odot$ represents the Hadamard product and ${\xi} \sim \mathcal{N}(0,\bf{I}_{{\rm{dim}}(\mathcal{A})})$.
Then the policy update gradients can be approximated with 
\begin{equation}
\label{eq04:policy_gradient_based_on_Q_PI}
\begin{aligned}
&\nabla_{\omega }J_{\pi}(\omega )=\mathbb{E}_{\mathcal{O}\sim\SPACE{\beta},\xi}\Big[-\alpha\nabla_{\omega }\log\pi(\VECTOR{a}|\VECTOR{s};\omega )+\big(\nabla_{\VECTOR{a}}Q(\VECTOR{s},\VECTOR{a};\theta )
\\&\qquad\qquad-\alpha\nabla_{\VECTOR{a}}\log\pi(\VECTOR{a}|\VECTOR{s};\omega )\big)\nabla_{\omega }\check{\pi}(\VECTOR{s},\xi;\omega)\big|_{\substack{\VECTOR{s}=U_{\rm ESC}(\SPACE{O};\phi ),\\\VECTOR{a}=\check{\pi}_{\omega}(\VECTOR{s},\xi)}}\Big].
\end{aligned}
\end{equation}

\subsubsection{Pseudocode}
As for the target NNs, we adopt a slow-moving update rate to stabilize the learning process, that is,
\begin{equation}
\label{eq.target_update}
\begin{aligned}
y' \leftarrow  \tau y+(1-\tau)y',
\end{aligned}
\end{equation}
where $\tau$ is the synchronization rate, and $y$ represents the parameters $\theta$, $\phi$, and $\omega$. Finally, according to \cite{duan2021distributional,Haarnoja2018ASAC}, the entropy coefficient $\alpha$ is updated by minimizing the following objective
\begin{equation}
\label{eq.entropy_objective}
J(\alpha)=\mathbb{E}_{a\sim \pi_\omega}[-\alpha \log\pi(a|s;\omega)-\alpha\overline{\mathcal{H}}],
\end{equation}
where $\overline{\mathcal{H}}$ is the expected entropy. 

Finally, we proposed the E-DSAC algorithm according to the above analysis. The detail of our algorithm can be shown as Algorithm \ref{alg:PI-DSAC}.
\begin{algorithm}[!htb]
\caption{E-DSAC Algorithm}
\label{alg:PI-DSAC}
\begin{algorithmic}
\STATE Initialize parameters $\theta $, $\omega $, $\phi $ and entropy coefficient $\alpha$
\STATE Initialize target parameters $\theta '\leftarrow\theta$, $\omega '\leftarrow\omega $ and $\phi '\leftarrow\phi $
\STATE Initialize learning rate $\beta_{\SPACE{Z}}$, $\beta_{\pi}$, $\beta_{h}$, $\beta_{\alpha}$ and $\tau$
\STATE Initialize iterative step $k=0$
\REPEAT
\STATE Receive observation $\SPACE{O}$ and calculate state $\VECTOR{s}$ using \eqref{eq.pi_state}
\STATE Select action $\VECTOR{a}\sim\pi_{\omega }(\cdot|\VECTOR{s})$, observe $\SPACE{O}'$ and $r$ 
\STATE Store transition tuple $(\SPACE{O},\VECTOR{a},r,\SPACE{O}')$ in $\SPACE{B}$
\STATE
\STATE Randomly choose $N$ samples $(\SPACE{O},\VECTOR{a},r,\SPACE{O}')$ from  $\SPACE{B}$ 
\STATE Calculate states $s$ and $s'$ using \eqref{eq.pi_state} and obtain  the augmented tuple  $(\SPACE{O},\VECTOR{s},\VECTOR{a},r,\SPACE{O}',\VECTOR{s}')$ 
\STATE Update value network with
\eqref{eq05:value_gradient}: \\
\qquad \qquad $\theta  \leftarrow \theta  - \beta_{\SPACE{Z}}\nabla_{\theta }J_{\SPACE{Z}}(\theta ,\phi )$
\STATE Update feature network with \eqref{eq05:encode_gradient}: \\
\qquad \qquad $\phi  \leftarrow \phi  - \beta_{h}\nabla_{\phi }J_{\SPACE{Z}}(\theta ,\phi )$
\IF{$k \% m = 0$}
\STATE Update policy network with \eqref{eq04:policy_gradient_based_on_Q_PI}: \\ \qquad \qquad $\omega  \leftarrow \omega  + \beta_{\pi}\nabla_{\omega } J_{\pi}(\omega )$
\STATE Update target networks with \eqref{eq.target_update}
\STATE Update entropy coefficient $\alpha$ with \eqref{eq.entropy_objective}: \\
\qquad \qquad $\VECTOR{\alpha} \leftarrow \VECTOR{\alpha} - \beta_{\alpha}\nabla_{\VECTOR{\alpha}} J_{\alpha}(\VECTOR{\alpha})$
\ENDIF
\STATE $k=k+1$
\UNTIL Convergence 
\end{algorithmic}
\end{algorithm}

\begin{figure}[!htb]
\captionsetup{singlelinecheck = false,labelsep=period, font=small}
\centering{\includegraphics[width=0.48\textwidth]{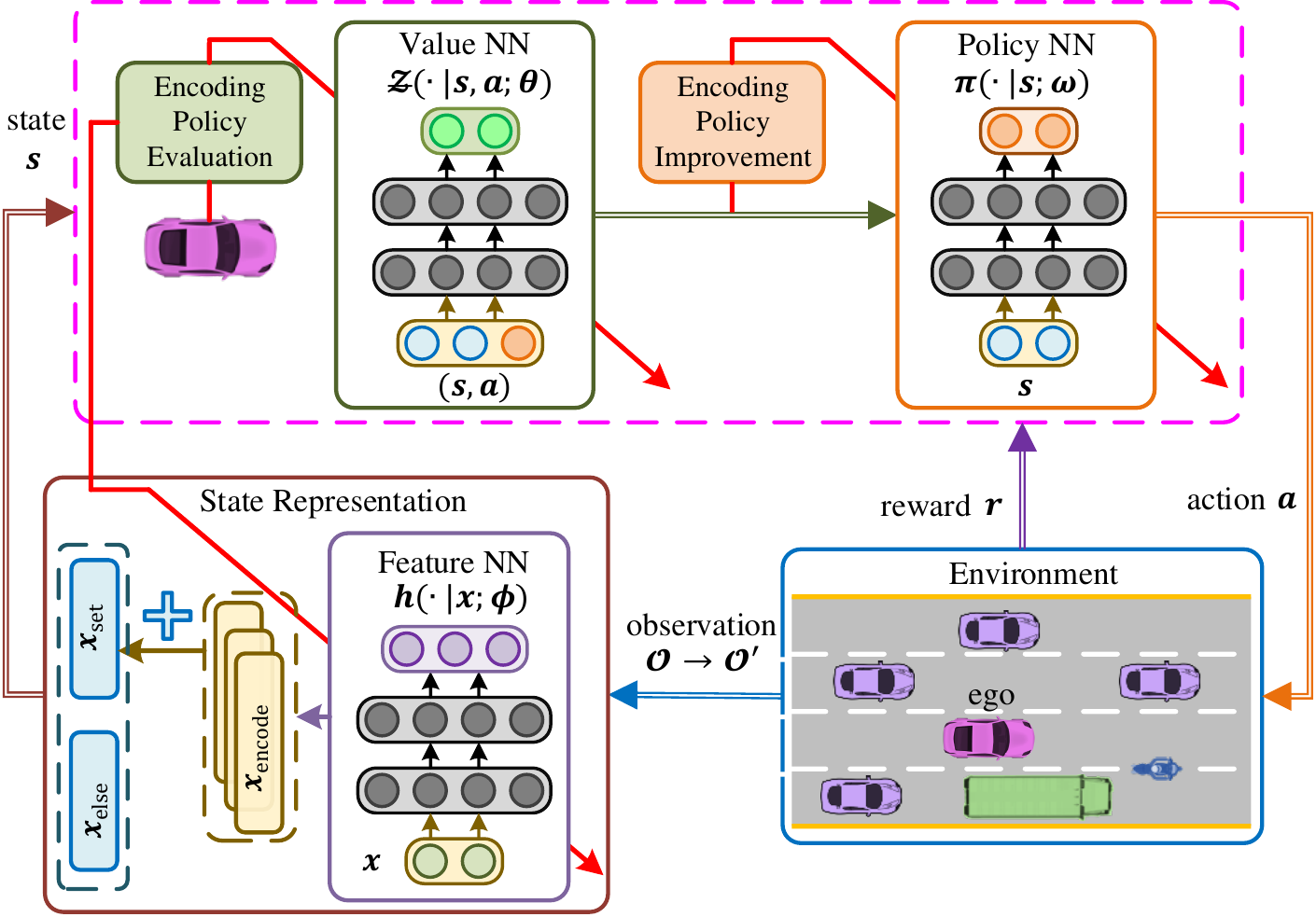}}
\caption{E-DSAC diagram. E-DSAC first updates the distributional value NN and feature NN based on the samples collected from the buffer. Then, the output of the value network is used to guide the update of the policy network.}
\label{f:E-DSAC}
\end{figure}

\section{Simulation Verification}
\label{sec:simulation}

This section validates the effectiveness of the proposed E-DSAC algorithm in a simulating multi-lane highway driving task.   

\subsection{Task Description}
As shown in Fig. \ref{f:scenario}, we built a one-way four-lane virtual road in SUMO \cite{SUMO2018} based on the 32km-long section of the Chinese Lianhuo Highway from point A (113$^\circ$29$'$03$''$E, 34$^\circ$51$'$48$''$N) to point B
(113$^\circ$48$'$53$''$E, 34$^\circ$49$'$53$''$N). Each lane is 3.75m wide. The speed limits of the four lanes from bottom to top are restricted to 60-100km/h, 80-100km/h, 90-120km/h, and 100-120km/h, respectively. The ego car would be initialized at a random position. To imitate real traffic situations, we arrange various motor vehicles on this road, including trucks, motorcycles, and cars of different sizes. All surrounding vehicles are initialized randomly at the beginning of each episode, and their movements are controlled by the car-following and lane-changing models of SUMO. The ego vehicle aims to ride as fast as possible without losing the guarantee of driving safety, regulations, and smoothness. 
\begin{figure}[!htb]
\captionsetup{singlelinecheck = false,labelsep=period, font=small}
\centering{\includegraphics[width=0.48\textwidth]{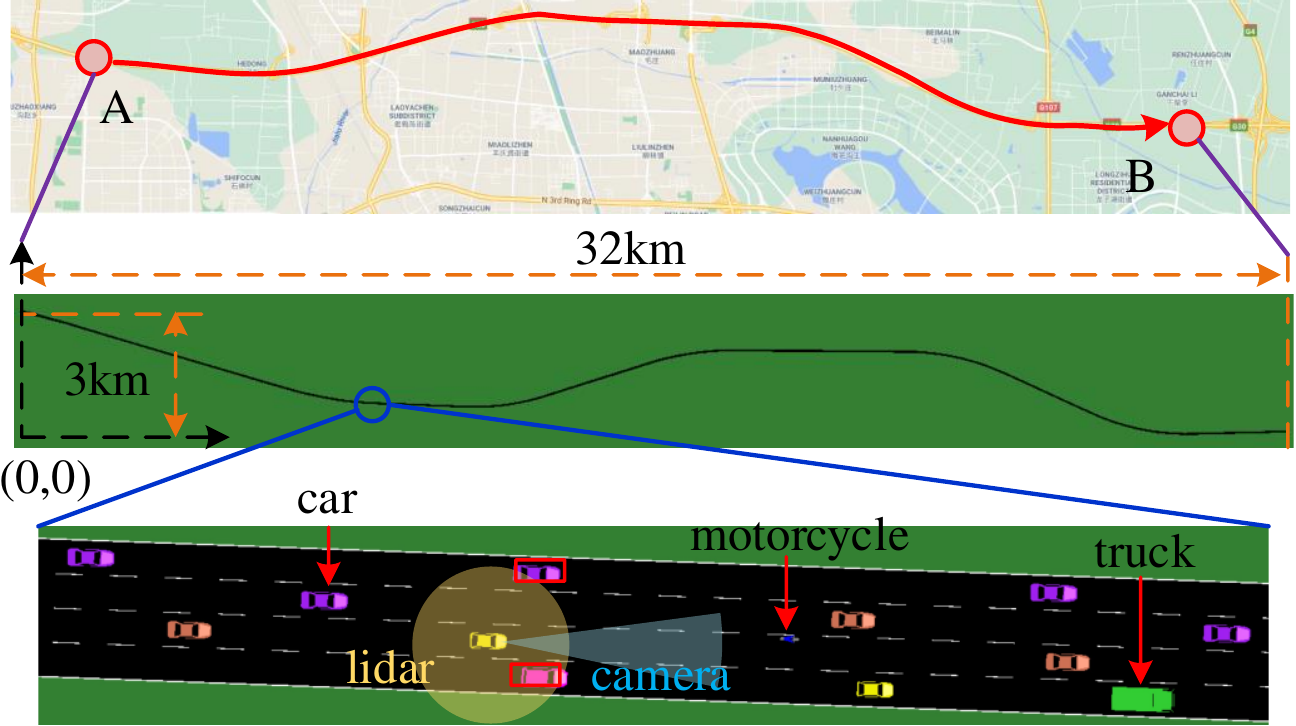}}
\caption{Illustration of the multi-lane highway driving scenario.}
\label{f:scenario}
\end{figure}

\subsection{Problem Formulation}
\subsubsection{Observation Design}
As mentioned before, the observation $\mathcal{O}$ consists of the information of surrounding vehicles and indicators related to the ego car and road geometry. As shown in Table \ref{tab:observation} and Fig. \ref{f:state_illustration}, we consider six indicators for each surrounding vehicle within the perception range,  including the longitudinal and lateral relative distance from the ego vehicle $D_{\rm long}$ and $D_{\rm lat}$, the relative speed $v_{\rm other}-v_{\rm ego}$, the heading angle relative to the lane centerline $\Phi_{\rm other}$, length $L_{\rm other}$, and width $W_{\rm other}$, i.e., $\VECTOR{x}=[D_{\rm long},D_{\rm lat},v_{\rm other}-v_{\rm ego},\Phi_{\rm other},L_{\rm other},W_{\rm other}]^\top$. In this simulation, the virtual sensor system of the ego car consists of a 360$^\circ$ lidar and a camera (see the shaded area in Fig. \ref{f:scenario}). According to the specifications of existing sensors such as HDL-32E and Mobileye 630 \cite{cao2020novel}, the effective range of the lidar is set to 80m, and that of the camera is set to 100m with a 38$^\circ$ horizontal field of view. Only the surrounding vehicles within the perception range and not blocked by other vehicles can be observed. Besides, each variable of surrounding vehicles is added with noise from a zero-mean Gaussian distribution before being observed. In particular, we assume the observation error of $x$ obeys $\mathcal{N}(0,{\rm diag}(0.14,0.14, 0.15,1,0.05,0.05))$. 

\begin{figure}[!htb]
\captionsetup{singlelinecheck = false,labelsep=period, font=small}
\centering{\includegraphics[width=0.48\textwidth]{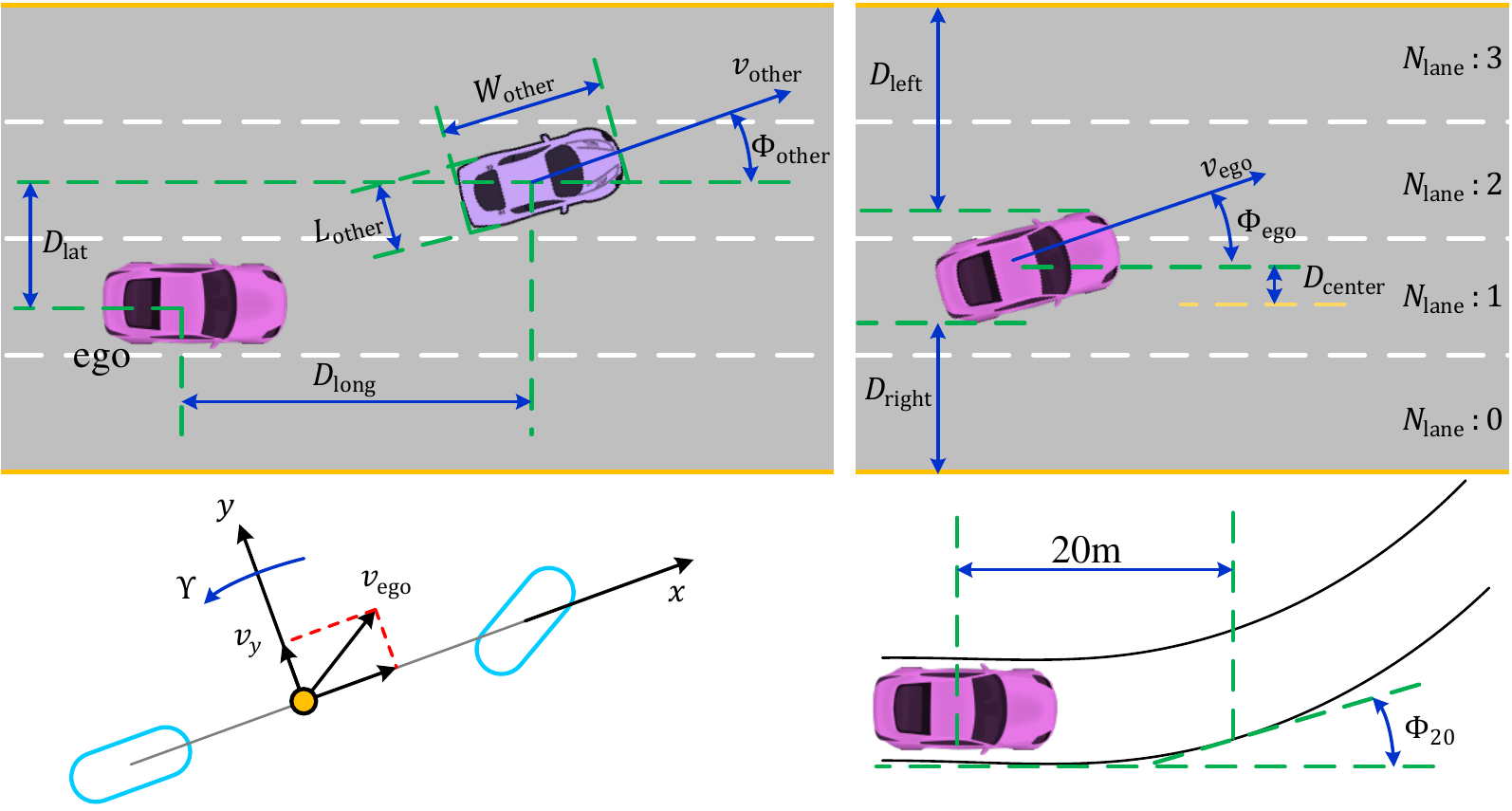}}
\caption{Illustration of some observation indicators.}
\label{f:state_illustration}
\end{figure}

In addition, $x_{\rm else}$ is designed as a 20-dimensional vector, of which 7 indicators are the information of the ego vehicle, namely vehicle speed $v_{\rm ego}$, lateral speed $v_y$, yaw rate $\Upsilon$, heading angle relative to the lane centerline $\Phi_{\rm ego}$, steering wheel angle $\Xi$, longitudinal acceleration $acc_{x}$, lateral acceleration $acc_{y}$. The other components of $x_{\rm else}$ are related to road geometry, including 
the distance from the ego to the lane centerline $D_{\rm center}$, distance to the left and right road edge $D_{\rm left}$, $D_{\rm right}$, the lane number $N_{\rm lane}$, the driving time in current lane $t_{\rm lanekeep}$, the difference between the ego speed and the upper and lower lane speed limits $v_{\rm upper}-v_{\rm ego}$, $v_{\rm ego}-v_{\rm lower}$. Furthermore, the lane direction changes at five different positions in front of the ego are used to describe the road shape, denoted as $\Phi_{\rm 10}$, $\Phi_{\rm 20}$, $\Phi_{\rm 30}$, $\Phi_{\rm 40}$ and $\Phi_{\rm 50}$. See Table \ref{tab:observation} and Fig. \ref{tab:observation} for more details.

\begin{table}[!htb]
\centering
\caption{Observation list}
\label{tab:observation}
\begin{tabular}{cccc}
\toprule
$\mathcal{O}$&Name & Symbol &Unit \\
\midrule
$x$&Longitudinal Distance&$D_{\rm long}$&m\\
&Lateral distance&$D_{\rm lat}$&m\\
&Relative speed&$v_{\rm other}-v_{\rm ego}$&m/s\\
&Relative heading angle&$\Phi_{\rm other}$& rad \\
&Length&$L_{\rm other}$& m \\
&Width&$W_{\rm other}$& m \\
\midrule
$x_{\rm else}$&Ego speed&$v_{\rm ego}$&m/s\\
&Lateral speed&$v_{y}$&m/s\\
&Yaw rate&$\Upsilon$&rad/s\\
&Heading angle of ego&$\Phi_{\rm ego}$&rad\\
&Steering wheel angle&$\xi$&rad\\
&Longitudinal acceleration&$acc_{x}$&m/${\rm{s}}^2$\\
&Lateral acceleration&$acc_{y}$&m/${\rm{s}}^2$\\
&Distance to centerline&$D_{\rm center}$&m\\
&Distance to left edge&$D_{\rm left}$&m\\
&Distance to right edge&$D_{\rm right}$&m\\
&Lane number&$N_{\rm lane}$& \\
&Speed difference to upper limit&$v_{\rm upper}-v_{\rm ego}$&m/s\\
&Speed difference to lower limit&$v_{\rm ego}-v_{\rm lower}$&m/s\\
&Lane-keeping time&$t_{\rm lanekeep}$&s\\
&Number of surrounding vehicles&$M$&\\
&Road direction change 10m ahead&$\Phi_{\rm 10}$&rad\\
&Road direction change 20m ahead&$\Phi_{\rm 20}$&rad\\
&Road direction change 30m ahead&$\Phi_{\rm 30}$&rad\\
&Road direction change 40m ahead&$\Phi_{\rm 40}$&rad\\
&Road direction change 50m ahead&$\Phi_{\rm 50}$&rad\\
\bottomrule
\end{tabular}
\end{table}

\subsubsection{Action Design} To prevent large discontinuity of the steering wheel angle, we choose the increment of the steering wheel angle and the expected longitudinal acceleration as actions, denoted as $\Delta\xi$ and $acc_{x,{\rm exp}}$, i.e, $\VECTOR{a} = [\Delta\xi,acc_{x,{\rm exp}}]^\top$. The expected steering wheel angle $\xi_{\rm exp}$ can be directly calculated as $\xi_{\rm exp}=\xi+\Delta\xi$. Since vehicles are controlled by saturated actuators, we let $\Delta\xi\in[-\frac{\pi}{9},\frac{\pi}{9}]$, $acc_{x,{\rm exp}}\in[-4,2]$ m/${\rm{s}}^2$. 

\subsubsection{Reward Design}
To realize reasonable autonomous driving on multi-lane highway, the reward function should consider driving efficiency, safety, regulations, and smoothness to guide the learning of driving policy. The reward function $r(\SPACE{O}, a, \SPACE{O}')$ can be expressed as
\begin{equation}
\label{eq05:reward_function}
r=\left\{\begin{array}{cc}-5000, &{\rm failure}\\R_{\rm speed}+R_{\rm smooth}+R_{\rm regulation}+R_{\rm safe},&{\rm else}\end{array}\right.,
\end{equation}
where $R_{\rm speed}$, $R_{\rm smooth}$, $R_{\rm regulation}$ and $R_{\rm safe}$ are the reward terms to encourage driving efficiency, smoothness, regulations, and safety, respectively. Moreover, -5000 is a large negative reward, which is used to punish some devastating events, including collision, driving off the road edge, and continuous lane changes in a short time (i.e., changing lanes when $t_{\rm lanekeep}< 3$).

$R_{\rm speed}$ is designed to encourage the ego vehicle to drive quickly by changing to high-speed lane:
\begin{equation}
\nonumber
\label{eq:speed_reward}
R_{\rm speed}=-0.6(v_{\rm max}-v_{\rm ego})^2,
\end{equation}
where $v_{\rm max}$ represents the maximum speed limit in all lanes, i.e., $v_{\rm max}=120$km/h. 


$R_{\rm smooth}$ aims to make the driving process smoother and more comfortable by regularizing control variables $acc_x$, $\Delta \xi$ and certain states of the ego vehicle, expressed as
\begin{equation}
\label{eq:comfort_reward}
\nonumber
\begin{aligned}
R_{\rm smooth}=&-{acc_x}^2-5(acc_{x,{\rm exp}}-acc_x)^2-80{\xi_{\text{exp}}}^2
-300{\Delta \xi}^2 \\
&-500{\Phi_{\rm ego}}^2-30{v_y}^2-500{\Upsilon}^2-{acc_y}^2.
\end{aligned}
\end{equation}

The regulation term $R_{\rm rule}$ encourages the ego vehicle to comply with driving rules
\begin{equation}
\label{eq:legal_reward}
\nonumber
\begin{aligned}
R_{\rm rule}= &\underbrace{-10{D_{\rm center}}^2-40(1-\tanh(4\min\{D_{\rm left},D_{\rm right}\}))}_{\text {deviation punishment}}\\
&\underbrace{-{\rm sgn}(v_{\rm ego}-v_{\rm upper})(v_{\rm ego}-v_{\rm upper})^2}_{\text {overspeed punishment}}\\
&\underbrace{-{\rm sgn}(v_{\rm lower}-v_{\rm ego})(v_{\rm lower}-v_{\rm ego})^2}_{\text {underspeed punishment}}.
\end{aligned}
\end{equation}
where $\rm sgn(\cdot)$ denotes the sign function, i.e., ${\rm sgn}(a)=1$ if $a\ge 0$, ${\rm sgn}(a)=0$, otherwise.

Last but not least, $R_{\rm safe}$ aims to improve driving safety by penalizing the distance between vehicles that are too close. As shown in Fig. \ref{f:risk_area}, we first introduce the lateral gap $D_{\rm LatGap}$ and longitudinal gap $D_{\rm LongGap}$ between the ego vehicle and each surrounding vehicle, expressed as 
\begin{equation}
D_{\rm LatGap}=|D_{\rm lat}|-\frac{W_{\rm other}+W_{\rm ego}}{2},
\end{equation}
and
\begin{equation}
D_{\rm LongGap}=|D_{\rm long}|-\frac{L_{\rm other}+L_{\rm ego}}{2},
\end{equation}
where $L_{\rm ego}$ and $W_{\rm ego}$ are the length and width of the ego vehicle. When $D_{\rm LatGap}\le 0$, a rear-end collision may occur between the vehicle and the corresponding surrounding vehicle. This means that we need to impose penalties if the longitudinal gap is too small in this case. On the other hand, when $D_{\rm LongGap}\le 0$, a side collision may occur, and penalties are required if the lateral gap is too small. Therefore, $R_{\rm safe}$ is designed as 
\begin{equation}
\label{eq:safe_reward}
\nonumber
\begin{aligned}
&R_{\rm safe}=\underbrace{70}_{\text{single-step incentive}}  \\
&\underbrace{-40\sum_{\VECTOR{x}\in\SPACE{X}}{\rm sgn}(-D_{\rm LatGap}){\rm sgn}(D_{\rm long})\Big(1-\tanh{\frac{D_{\rm LongGap}}{v_{\rm ego}}}\Big)}_{\text{Penalty for being close to cars ahead} }\\
&\underbrace{-25\sum_{\VECTOR{x}\in\SPACE{X}}{\rm sgn}(-D_{\rm LatGap}){\rm sgn}(-D_{\rm long})\Big(1-\tanh{\frac{D_{\rm LongGap}}{\VECTOR{v_{\rm other}}}}\Big)}_{\text{Penalty for being close to cars behind}}\\
&\underbrace{-40\sum_{\VECTOR{x}\in\SPACE{X}}{\rm sgn}(-D_{\rm LongGap})\Big(1-\tanh(1.5D_{\rm LatGap})\Big)}_{\text{Penalty for being close to side cars}}.
\end{aligned}
\end{equation}
\begin{figure}[!htb]
\captionsetup{singlelinecheck = false,labelsep=period, font=small}
\centering{\includegraphics[width=0.4\textwidth]{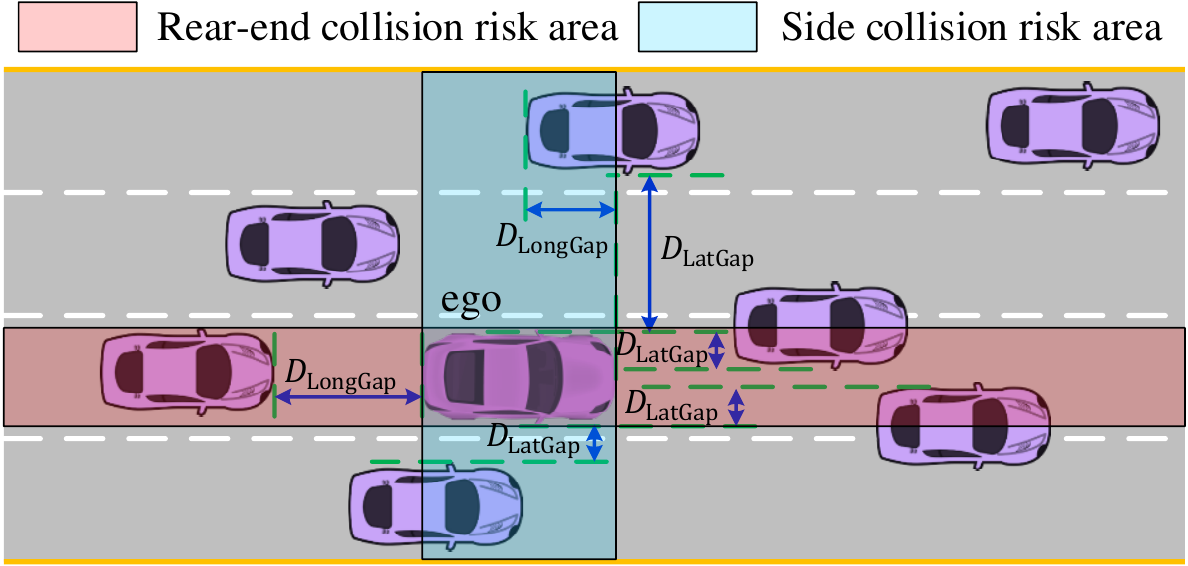}}
\caption{Collision risk area.}
\label{f:risk_area}
\end{figure}

\subsection{Algorithm Details}
Based on the above problem setting, we validate the effectiveness of the proposed E-DSAC by comparing it with the DSAC algorithm that takes $U_{\rm FP}(\SPACE{O})$ in \eqref{eq.mapping_s} as states. Since $U_{\rm FP}(\SPACE{O})$ is permutation sensitive and only suitable for a fixed number of surrounding vehicles, DSAC only considers the nearest 6 vehicles, which are arranged in increasing order according to relative distance. The proposed E-DSAC algorithm is implemented in two settings: (1) considering the nearest 6 vehicles, i.e., $M=6$; (2) considering all observed surrounding vehicles within the perception region, i.e., $M\in[1,N]$ is constantly changing.

The value function, policy, and feature NN adopt almost the same NN architecture, which contains 5 hidden layers, with 128 units per layer. All hidden layers take Gaussian Error Linear Units (GELU)  \cite{hendrycks2016gelu} as activation functions. According to Lemma \ref{lemma.encoding}, the output dimension $d_3$ of $h_{\phi}$ should satisfy that $d_3\ge Nd_1+1$. We assume the maximum number of surrounding vehicles with perception region is 20, i.e., $N=20$, so we set $d_3=121$. The Adam method \cite{Diederik2015Adam} with a cosine annealing learning rate is used to update all NNs. See Table \ref{tab:hyper} for more training details.
\begin{table}[!htb]
\centering
\caption{Training hyper-parameters}
\label{tab:hyper}
\begin{tabular}{ccc}
\toprule
 Name & symbol &Value \\
\hline
 Batch size& & 256\\
 Value learning rate& $\beta_{\SPACE{Z}}$ & $8\times 10^{-5}\rightarrow 4\times 10^{-5}$\\
 Feature learning rate & $\beta_h$&$8\times 10^{-5}\rightarrow 4\times 10^{-5}$\\
Policy learning rate & $\beta_\pi$&$5\times 10^{-5}\rightarrow 4\times 10^{-5}$\\
 Learning rate of $\alpha$ &$\beta_\alpha$& $1\times 10^{-4}\rightarrow4\times 10^{-5} $ \\
Discounted factor&$\gamma$ & 0.99\\
Update delay & $m$&2\\
 Target update rate & $\tau$&0.001\\
 Target entropy  &  $\overline{\SPACE{H}}$&-2\\
\bottomrule
\end{tabular}
\end{table}

\subsection{Results}
For each case, we train five different runs with evaluations every 20000 iterations. We evaluate the driving performance by calculating the average return over five episodes at each evaluation, where the maximum length of each episode is 500 time steps. Fig.~\ref{f:return} demonstrates the learning curves of E-DSAC and DSAC. In the case of considering the 6 nearest surrounding vehicles, the final policy performance learned by E-DSAC (14398.46$\pm$177.67) is about 
three times that of DSAC (4556.10$\pm$1134.72). The only difference between DSAC and E-DSAC when $M=6$ is that E-DSAC makes decisions based on permutation invariant state representation with the help of feature NN. In addition to the performance gains, E-DSAC also eliminates the requirement of manually designing sorting rules $o$ in $U(\mathcal{O})$. On the other hand, the E-DSAC ($M\in[1,N]$) that considers all observed surrounding vehicles slightly outperforms E-DSAC ($M=6$). This indicates that the proposed E-DSAC algorithm is capable of handling a variable number of surrounding vehicles. Although 6 surrounding vehicles seem to meet the basic needs of self-driving in this simulation, such a number may not be enough for other scenarios such as complex intersections. The capability of E-DSAC to process variable-size sets makes it more suitable for different driving tasks. To conclude, the state encoding module adopted by E-DSAC significantly improves the policy performance, relaxing the requirement for predetermined sorting rules and vehicle number restrictions.

\begin{figure}[!htb]
\captionsetup{singlelinecheck = false,labelsep=period, font=small}
\centering{\includegraphics[width=0.4\textwidth]{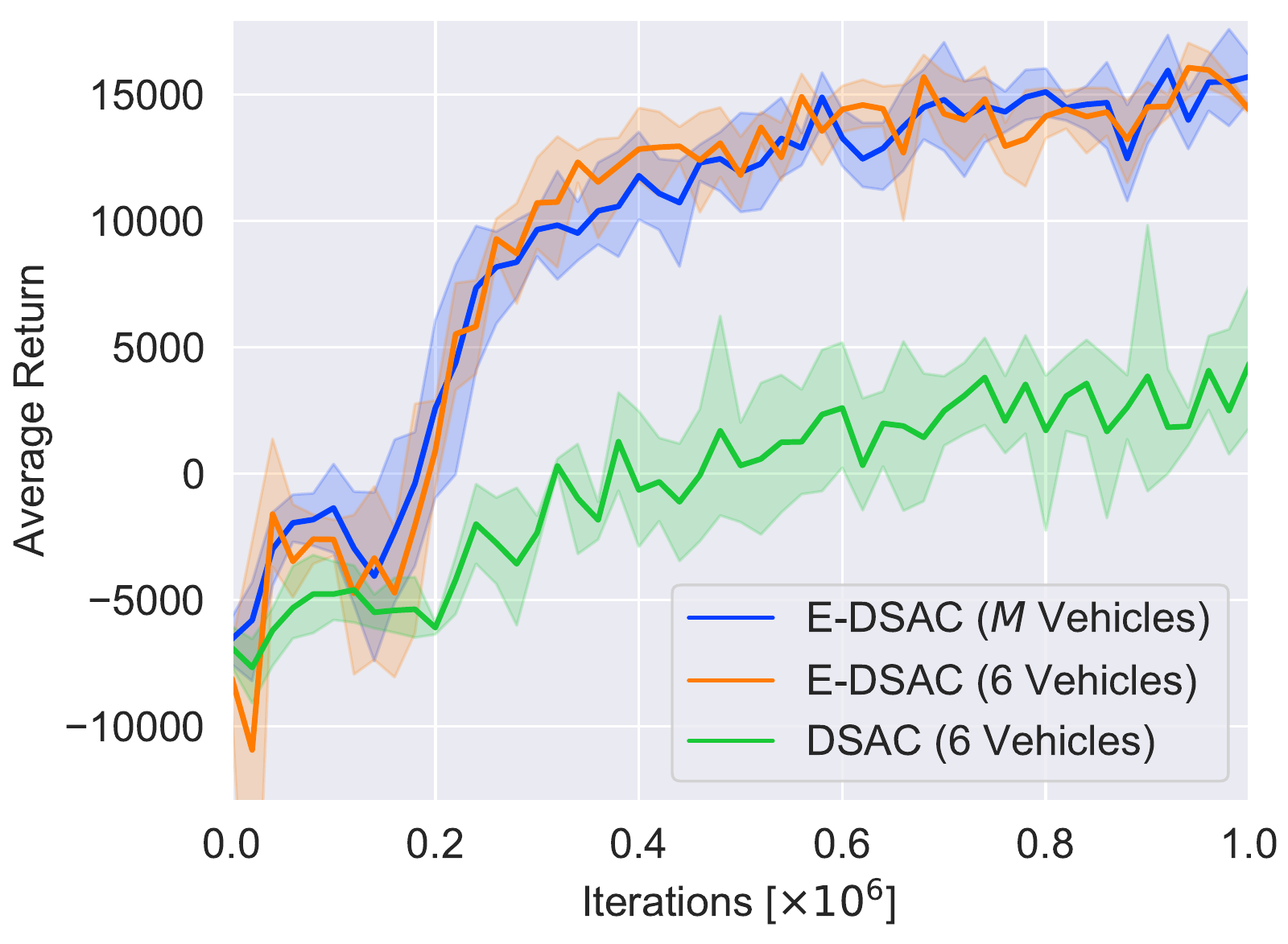}}
\caption{Return comparison. The solid lines correspond to the mean and the shaded regions correspond to 95\% confidence interval over 5 runs.}
\label{f:return}
\end{figure}

Next, we will analyze the driving behavior of the learned policy. As the final policy performance of E-DSAC ($M\in[1,N]$) and E-DSAC ($M=6$) is similar, only the former will be considered in the sequel. In addition to E-DSAC and DSAC, we have also introduced a rule-based baseline, in which we control the ego vehicle using the Krauss car-following and SL2015 lane-changing models of SUMO \cite{SUMO2018}. Firstly, for each method, we performed 100 simulations starting from a random location of the rightmost lane, i.e., low-speed lane. The maximum time length of each simulation is 100 seconds. Assuming the average speed of ego is 100km/h, the total simulated driving distance is about 300km. Fig. \ref{f:avg_speed} shows the average speed of the ego vehicle, the preceding vehicle, and all observed surrounding vehicles (i.e., traffic) during the simulation. Results show that the policy learned by E-DSAC maintains a high level of driving efficiency. Its average speed (113.04$\pm$6.55 km/h) is about 13.24km/h higher than DSAC, and 11.10km/h higher than SUMO. It is worth noting that only the ego car controlled by E-DSAC rides at a speed faster than the average speed of the traffic flow (about 11.46km/h higher), whose speed is also 2.85km/h higher than the preceding vehicle. This means that the self-driving car has learned to speed up by changing to high-speed lanes or overtaking.  As a comparison, the speed of the ego car controlled by DSAC is lower than traffic and its preceding vehicle. This indicates the DSAC policy tends to make conservative decisions, such as following the vehicle ahead at a lower speed.
\begin{figure}[!htb]
\captionsetup{singlelinecheck = false,labelsep=period, font=small}
\centering{\includegraphics[width=0.4\textwidth]{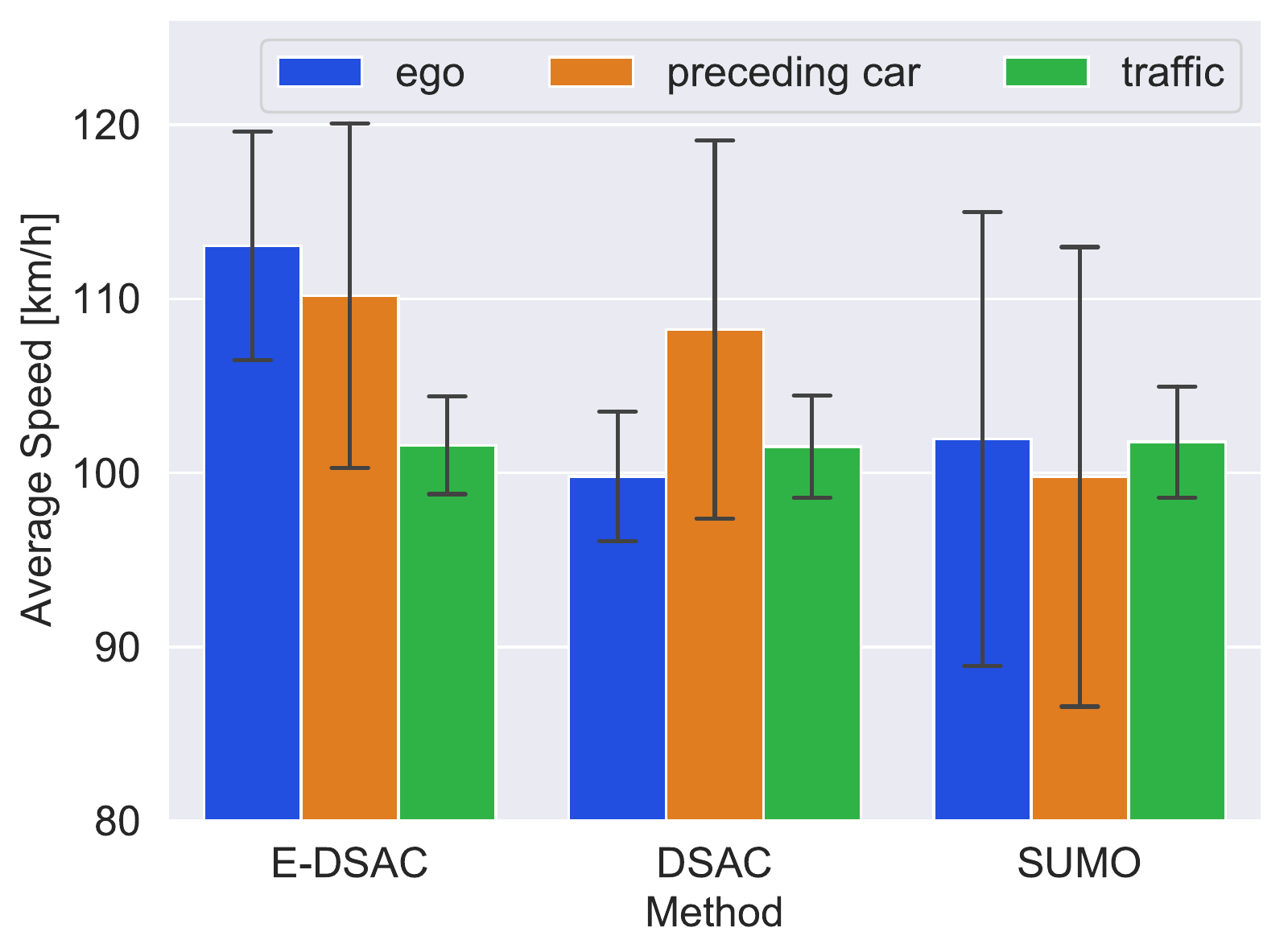}}
\caption{Average speed comparison. The error bar represents the standard deviation.}
\label{f:avg_speed}
\end{figure}

Fig. \ref{f:lanechange_compare} compares the changes in the following distance and the speed of the preceding vehicle before and after lane change. It can be seen that after changing lanes, the following distance and the speed of the preceding vehicle have increased by 26.11m and 13.14km/h, respectively. This indicates that the policy has learned to actively change into lanes with sparse traffic and faster speeds, which further explains the high driving efficiency of E-DSAC shown in Fig. \ref{f:avg_speed}.
\begin{figure}[!htb]
\centering
\captionsetup[subfigure]{justification=centering}
\subfloat[Distance comparison]{\includegraphics[width=0.49\linewidth]{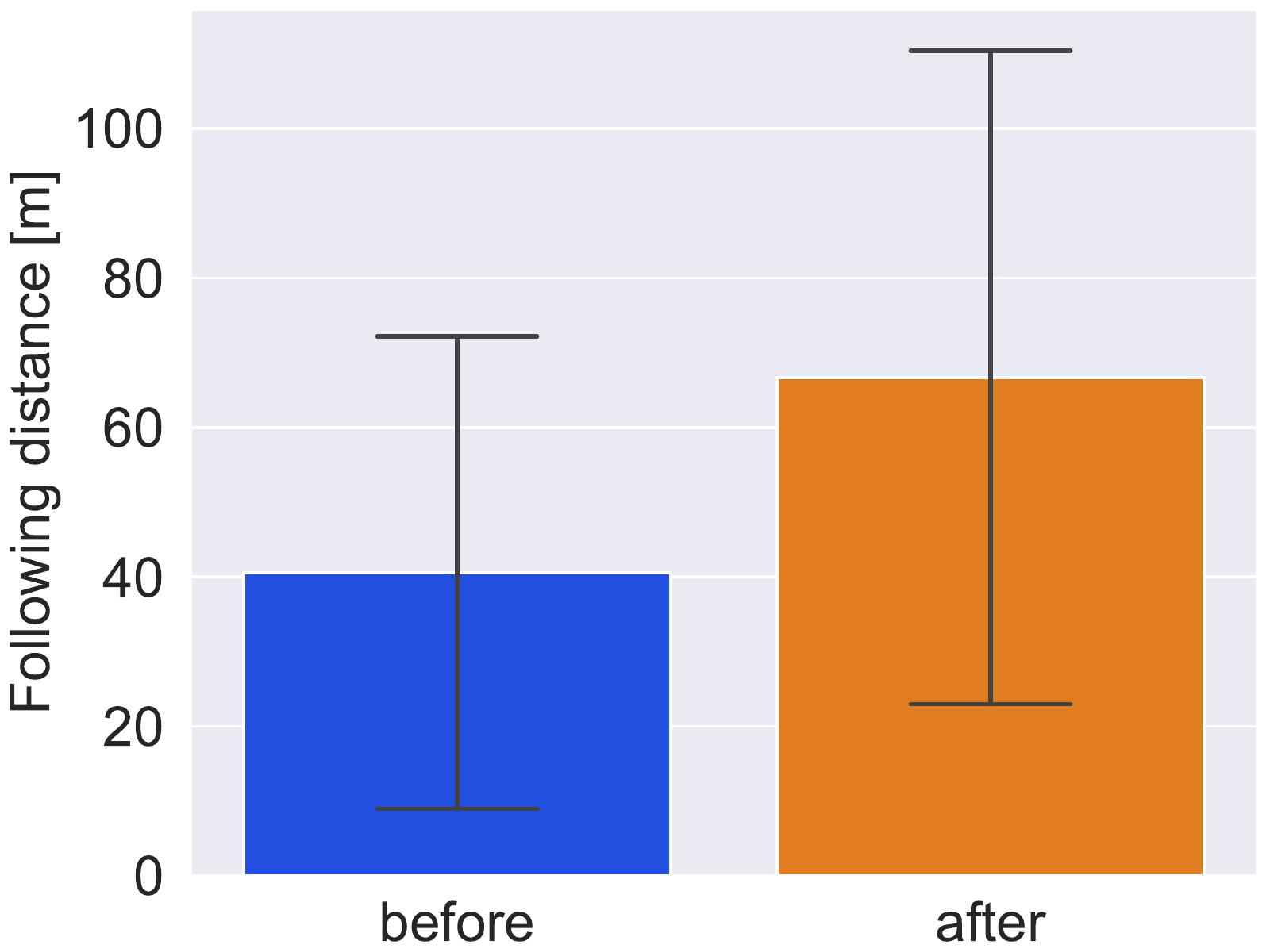}}
\subfloat[Speed comparison]{\includegraphics[width=0.49\linewidth]{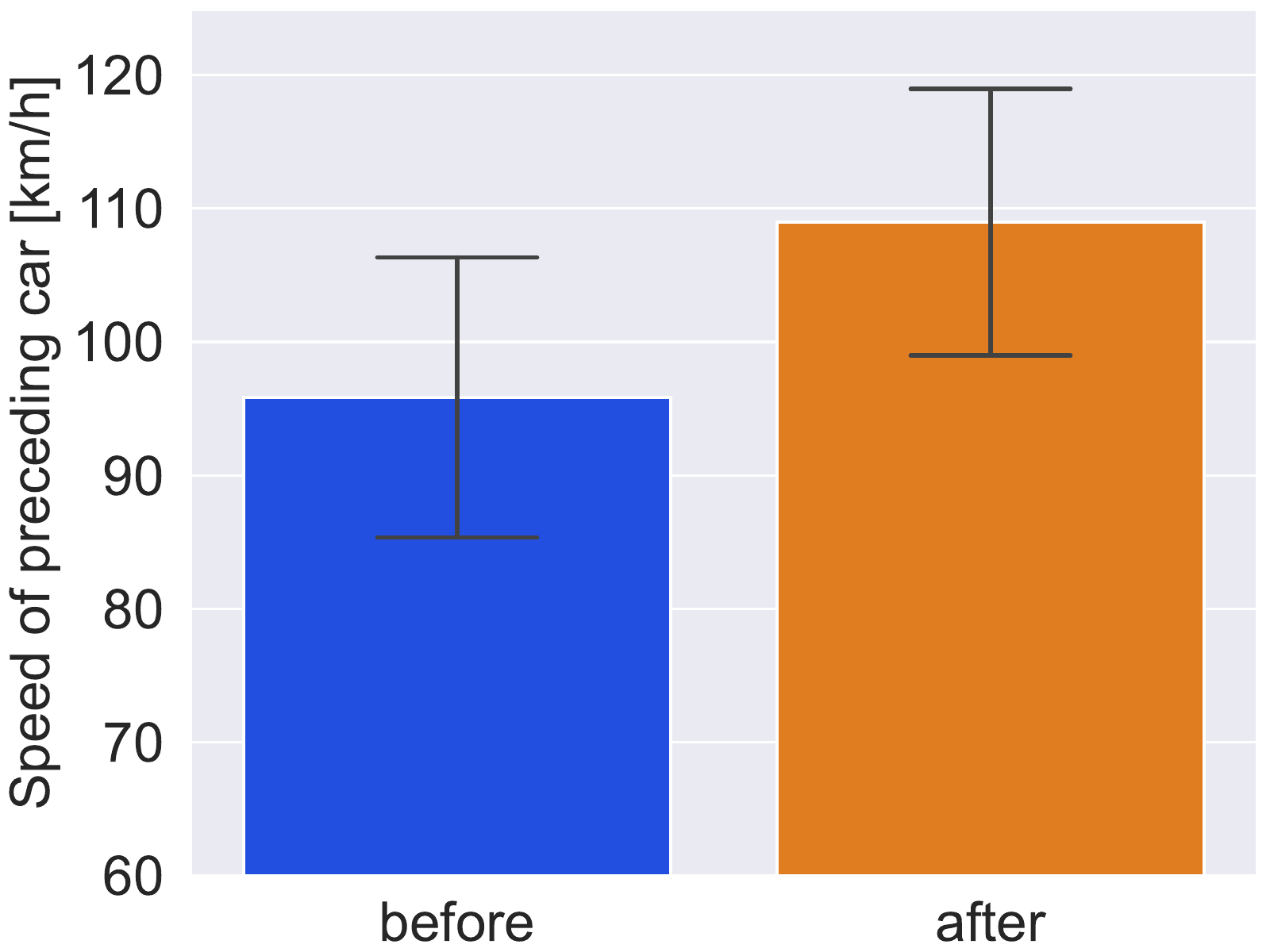}}
\\
\caption{Difference before and after lane changing. The error bar represents the standard deviation.}
\label{f:lanechange_compare}
\end{figure} 

A typical driving process and corresponding state curves are visualized in Fig.~\ref{f:e1_simu_traj} and \ref{f:e1_state}, respectively. As shown in Fig. \ref{fig:e1_tra_1}, \ref{fig:e1_tra_2} and \ref{fig:e1_velocity}, the ego vehicle is initialized in the outermost lane, and then the first left lane change is completed after going straight for about 100m. During this process, the vehicle speed is accelerated from 90km/h to about 100km/h. Next, the ego vehicle goes straight for a period of time, and then completes the second left lane change, increasing the driving speed to about 110km/h (See Fig. \ref{fig:e1_tra_3} and \ref{fig:e1_tra_4}). Throughout the entire driving process, the ego vehicle merges from the low-speed lane into the high-speed lane through two left lane changes, while maintaining a reasonable distance from the surrounding traffic. It can also be seen from Fig. \ref{fig:e1_control} and \ref{fig:e1_angle} that the control inputs and state curves are very smooth. Appendix \ref{appen.example} gives an instance of the overtaking process. These two cases show that the learned policy is capable of finding a reasonable lane-change position and timing through acceleration and deceleration.

\begin{figure}[!htb]
\captionsetup{singlelinecheck = false,labelsep=period, font=small}
\centering
\captionsetup[subfigure]{justification=centering}
\subfloat[Initialization]{\label{fig:e1_tra_1}\includegraphics[width=0.99\linewidth]{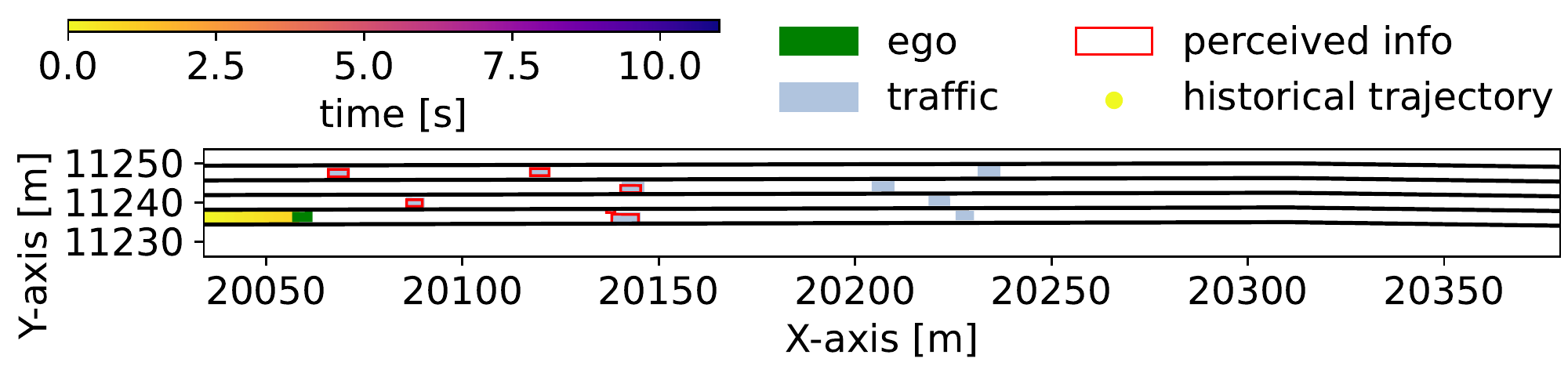}}
\\
\subfloat[The 1st lane change]{\label{fig:e1_tra_2}\includegraphics[width=0.99\linewidth]{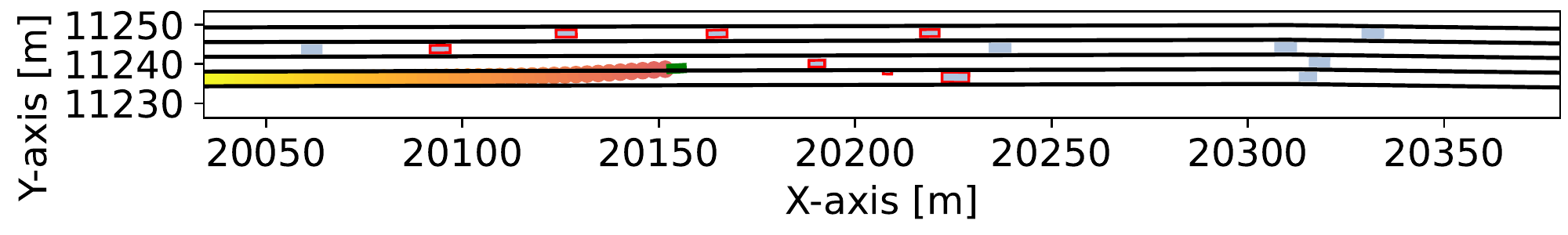}}
\\
\subfloat[The 2nd lane change]{\label{fig:e1_tra_3}\includegraphics[width=0.99\linewidth]{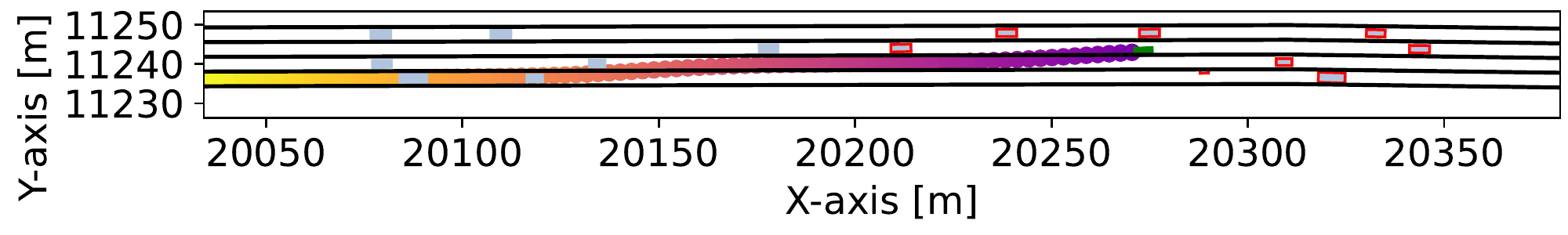}}
\\
\subfloat[Going straight]{\label{fig:e1_tra_4}\includegraphics[width=0.99\linewidth]{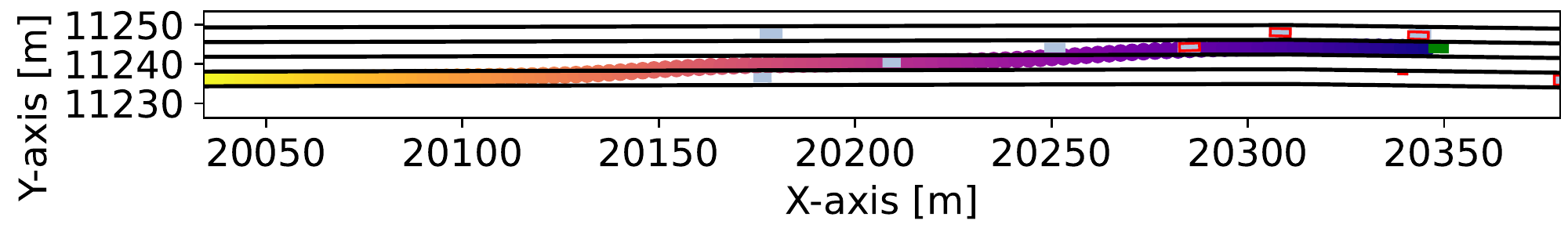}}\\
\caption{Simulation 1: trajectory. The red box represents the indicators of the perceived vehicle.}
\label{f:e1_simu_traj}
\end{figure} 

\begin{figure}[!htb]
\centering
\captionsetup[subfigure]{justification=centering}
\subfloat[Action commands]{\label{fig:e1_control}\includegraphics[width=0.99\linewidth]{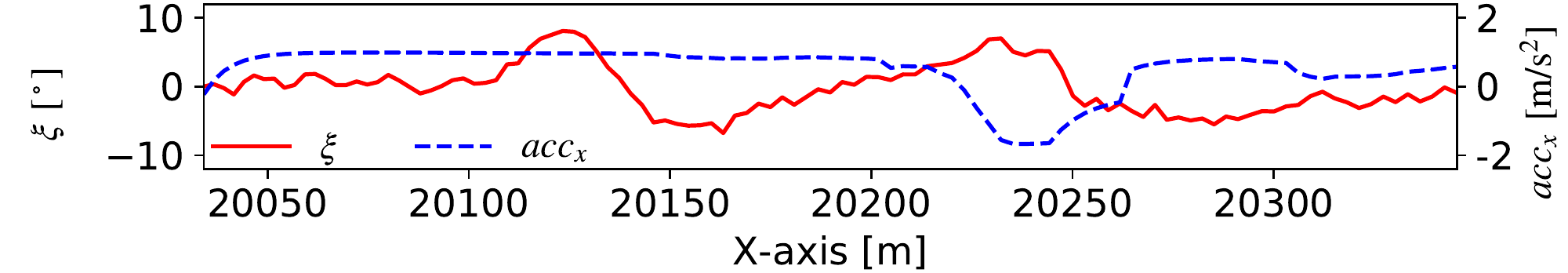}}\\
\subfloat[Heading angle and yaw rate]{\label{fig:e1_angle}\includegraphics[width=0.99\linewidth]{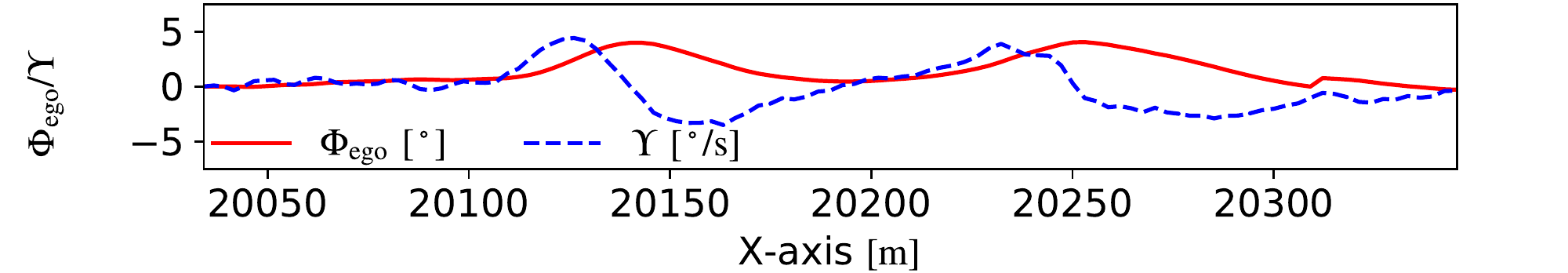}}\\
\subfloat[Speed]{\label{fig:e1_velocity}\includegraphics[width=0.99\linewidth]{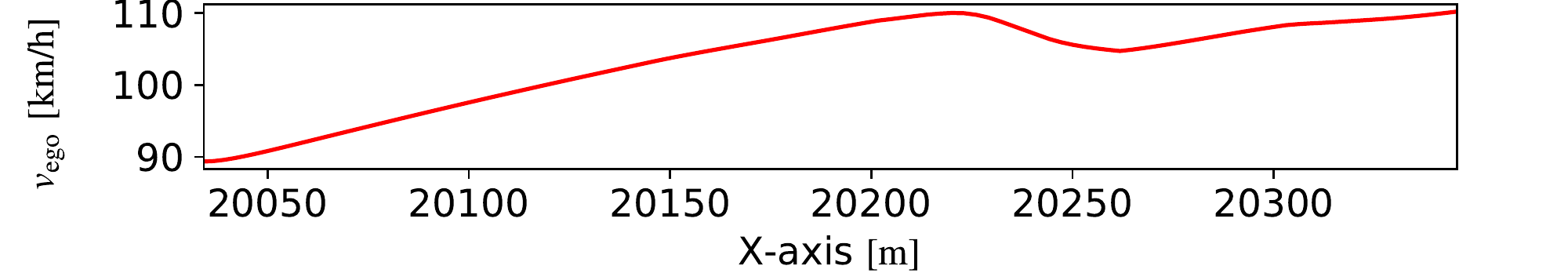}}\\
\caption{Simulation 1: state and action curves}
\label{f:e1_state}
\end{figure} 

In Fig. \ref{f:distance_distribution}, we display the joint distribution of the following distance and the distance between ego and its following vehicle. Results show that the ego car can maintain a reasonable distance from surrounding vehicles during the entire simulation process, providing evidence for the safety of the E-DSAC policy. In summary, E-DSAC outperforms DSAC by a wide margin in terms of policy performance in the field of autonomous driving. The policy learned by E-DSAC can realize efficient, smooth, and relatively safe autonomous driving in the designed multi-lane highway scenario. Besides, Appendix \ref{appen.extension} demonstrates that the proposed encoding policy iteration framework can also be extended to other RL algorithms such as SAC \cite{Haarnoja2018ASAC}, TD3 \cite{Fujimoto2018TD3}, and DDPG \cite{lillicrap2015DDPG}. 
\begin{figure}[!htb]
\captionsetup{singlelinecheck = false,labelsep=period, font=small}
\centering{\includegraphics[width=0.4\textwidth]{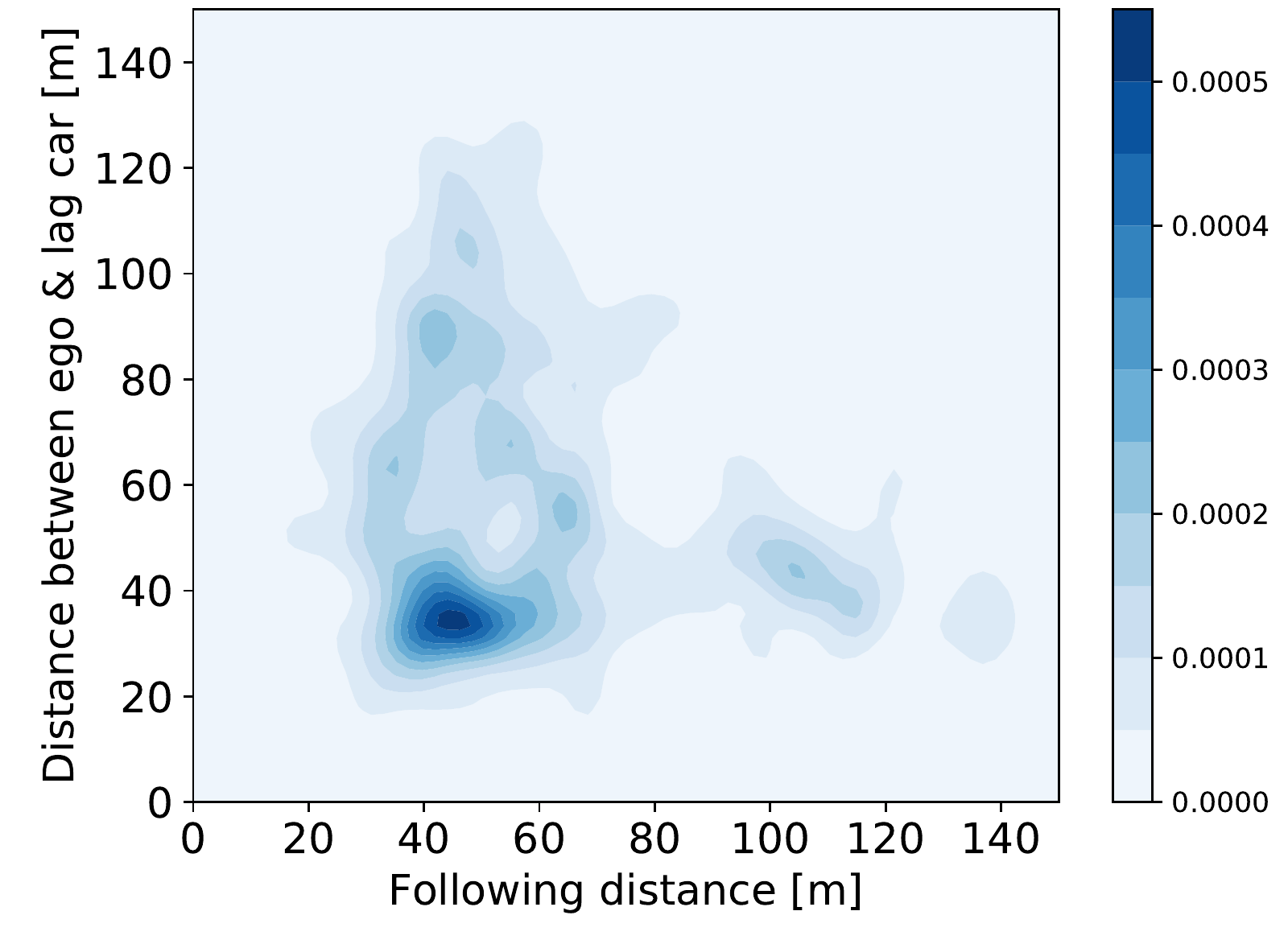}}
\caption{Distance distribution.}
\label{f:distance_distribution}
\end{figure}

\section{Real Vehicle Test}
\label{sec:real_veh_test}
In this section, we deploy the learned policy on a real automated vehicle on a two-lane park road to verify the effectiveness of E-DSAC in practical applications. 
\subsection{Experiment Description}
The experiment road located at ($36^{\circ}18'20''$N, $120^{\circ}40'25''$E) in Suzhou, China, has a total length of $170$m, and each lane is 3.5m wide. There are two speed bumps on this road. The test vehicle is a Chang-an CS55 SUV, equipped with speed following and steering wheel angle tracking systems. An industrial PC (IPC) is employed to send control commands. The indicators of ego vehicle can be obtained through RTK modules and CAN bus, and the 51Sim-One traffic simulation software is adopted to generate continuous surrounding traffic. 

The experiment pipeline is shown in Fig.~\ref{f:architeture}. At first, we used E-DSAC in SUMO to train a decision-making module composed of policy and feature NNs offline, and then deployed it in IPC. At each time step, the decision module will output the corresponding control commands, according to the observation information received by IPC. Noted that in this experiment, we only consider the lateral decision-making and control, so only the expected steer angle command is sent to the steering wheel angle tracking system through CAN bus. For longitudinal control, the speed tracking system is used to track the expected speed, which is 20km/h. The autonomous driving process will continue until the end of the road.
\begin{figure}[!htb]
\captionsetup{singlelinecheck = false,labelsep=period, font=small}
\centering{\includegraphics[width=0.99\linewidth]{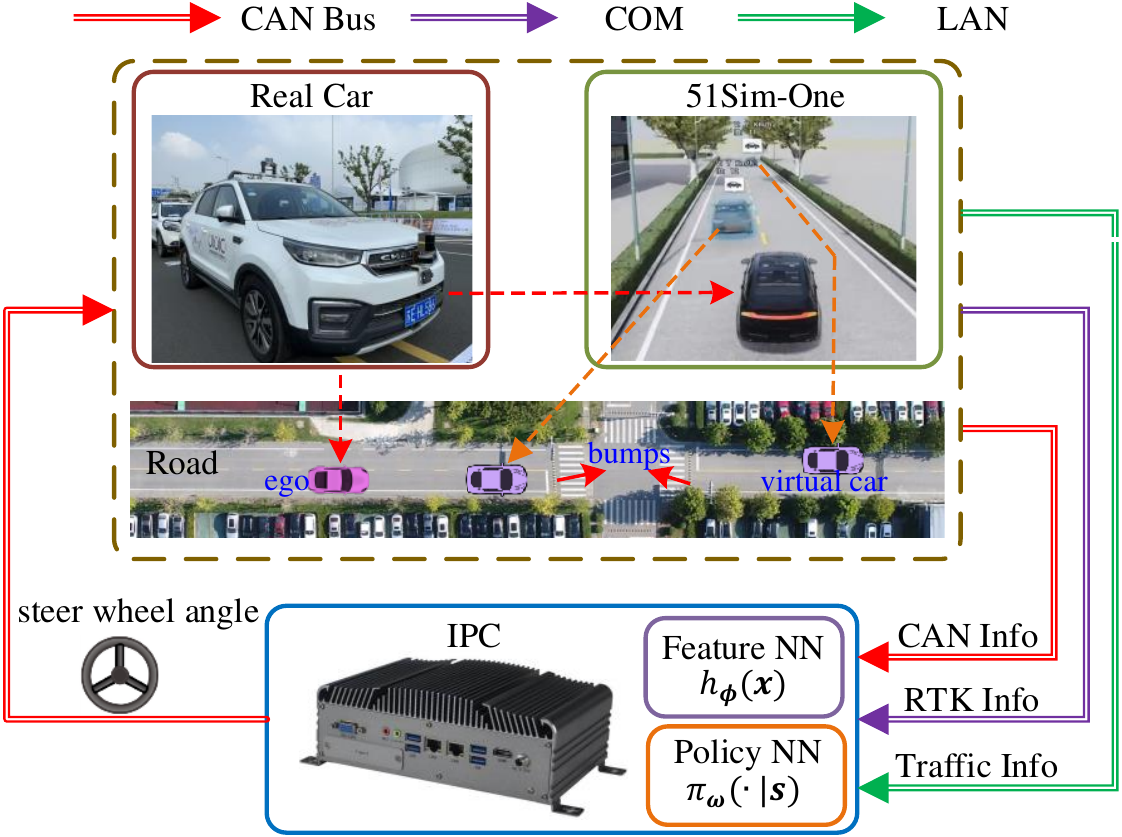}}
\caption{Real vehicle test architecture.}
\label{f:architeture}
\end{figure}

\subsection{Experimental Results}

Fig. ~\ref{f: experiment_example_1} shows a typical driving process and the corresponding state and action curves. There were two surrounding vehicles on the road with a speed of about 7.2km/h. From Fig. \ref{fig:experi_tra}, the ego vehicle started from the right lane, and immediately chose to change to the left lane due to the slow speed of the preceding car. After that, it gradually approached another front car in the left lane. Next, the ego vehicle changed to the right lane to avoid collision, and finally drove to the end. Fig. \ref{fig:experi_control} displays the expected and actual steering wheel angle curves during driving. Although there exists a system response delay of about 0.1s between the expected value and the actual, the actual steering wheel angle changed smoothly during the whole driving process, and only a small oscillation occurred when the vehicle passes through two bumps. Similarly, the yaw rate and heading angle maintained a smooth trend during riding in Fig.~\ref{fig:experi_state}, which indicates that the proposed algorithm can assure a satisfactory level of driving comfort. Besides, as shown in Fig.~\ref{fig:experi_deviation}, the vehicle mainly ran near the centerline while going straight.
\begin{figure}[thpb]
\centering
\captionsetup{singlelinecheck = false,labelsep=period, font=small}
\captionsetup[subfigure]{justification=centering}
\subfloat[Trajectory]{\label{fig:experi_tra}\includegraphics[width=0.9\linewidth]{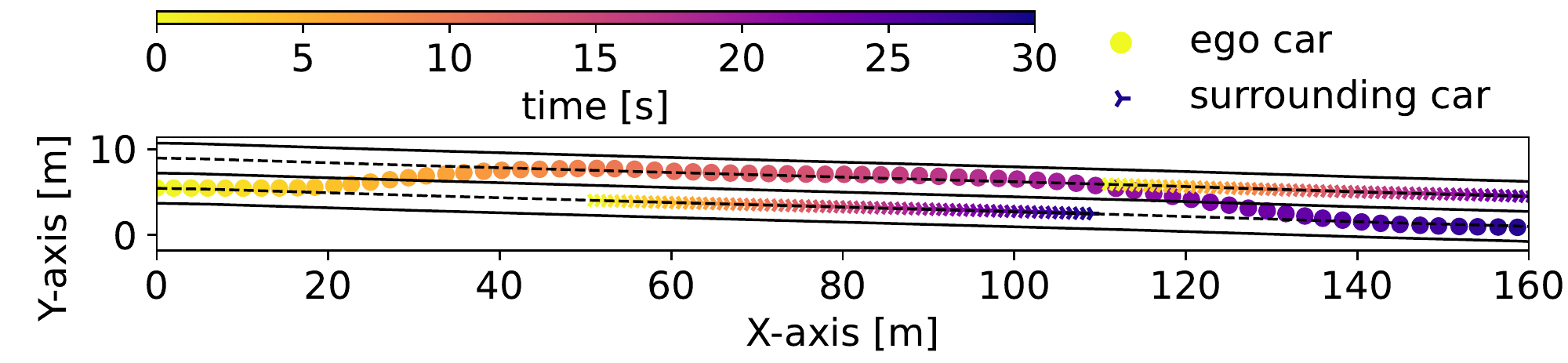}}
\\
\subfloat[Steering wheel angle]{\label{fig:experi_control}\includegraphics[width=0.9\linewidth]{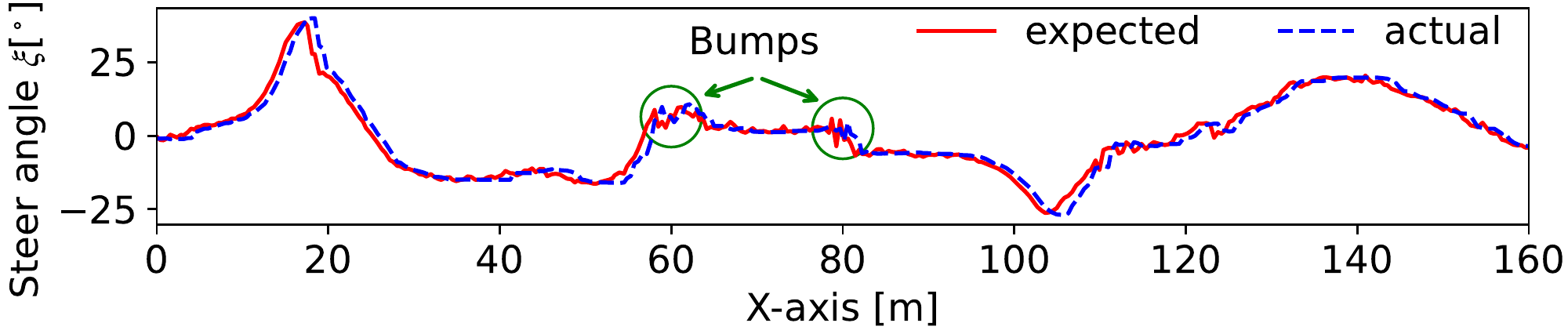}}
\\
\subfloat[Heading angle and yaw rate]{\label{fig:experi_state}\includegraphics[width=0.9\linewidth]{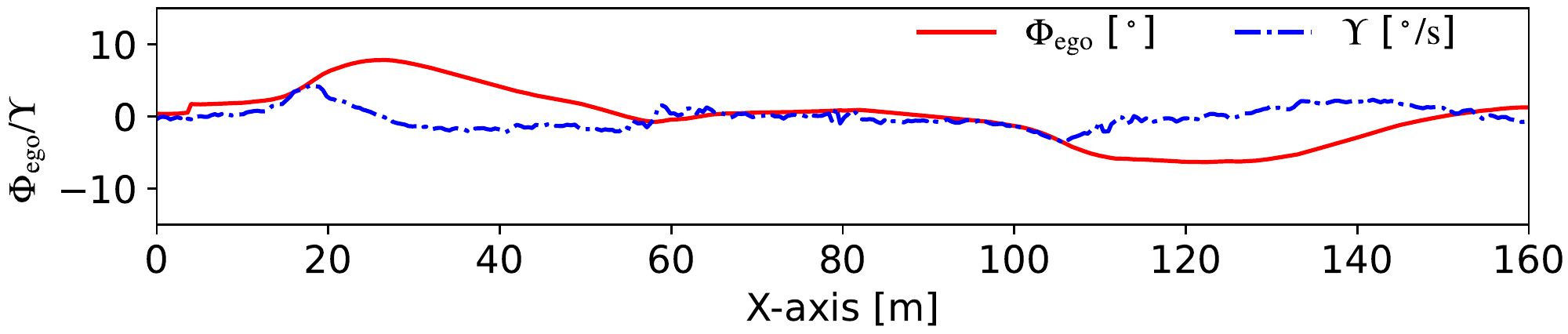}}
\\
\subfloat[Distance to lane center]{\label{fig:experi_deviation}\includegraphics[width=0.9\linewidth]{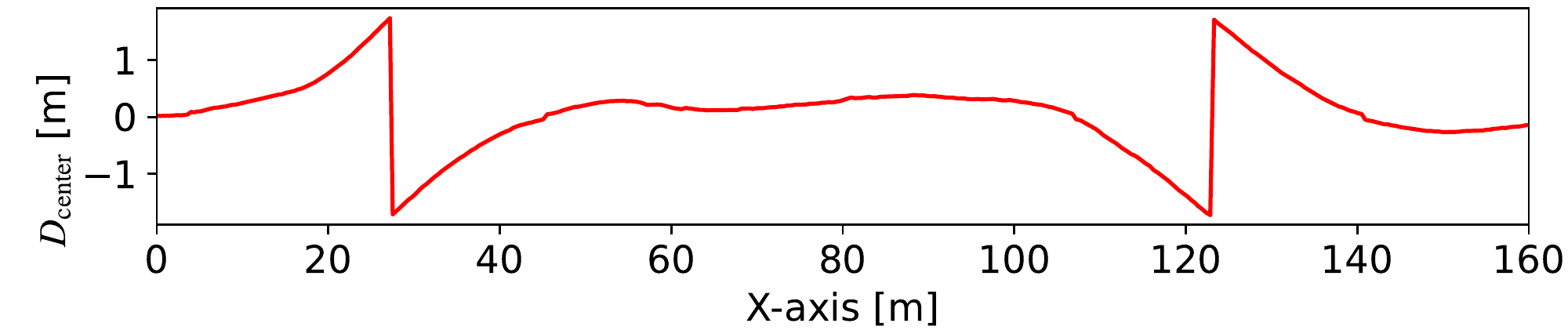}}
\caption{Experiment 1: trajectory, state and action curves.}
\label{f: experiment_example_1}
\end{figure}

Fig. ~\ref{f: experiment_tra} shows five trajectories of autonomous driving under different initial conditions. In Fig. \ref{fig:tra_1}, the speeds of vehicles ahead in the right and left lanes were about 7.2km/h and 20km/h, respectively. The ego car first changed to the left lane, and then went straight until the end. In Fig. \ref{fig:tra_2}, the ego car turned to the left lane to avoid collision when it found that the vehicle ahead (7.2km/h) in the left lane suddenly changed lane to the right. After that, a right lane change was made to realize continuous collision avoidance. In Fig. \ref{fig:tra_3}, the ego car headed straight to the end due to the fast speed (20km/h) of the preceding car. In Fig. \ref{fig:tra_4}, the ego car has changed lanes three times in a row to avoid the slow-moving vehicle ahead (3.6km/h). In Fig. \ref{fig:tra_5}, the ego bypassed two stationary cars and drove towards the destination. Combining Fig. \ref{f: experiment_example_1} and \ref{f: experiment_tra}, experimental results show that the learned policy of E-DSAC can smoothly complete maneuvers such as lane-keeping, lane changing and overtaking, so as to realize autonomous driving in response to different surrounding vehicles. 

\begin{figure}[thpb]
\captionsetup{singlelinecheck = false,labelsep=period, font=small}
\centering
\captionsetup[subfigure]{justification=centering}
\subfloat[]{\label{fig:tra_1}\includegraphics[width=0.9\linewidth]{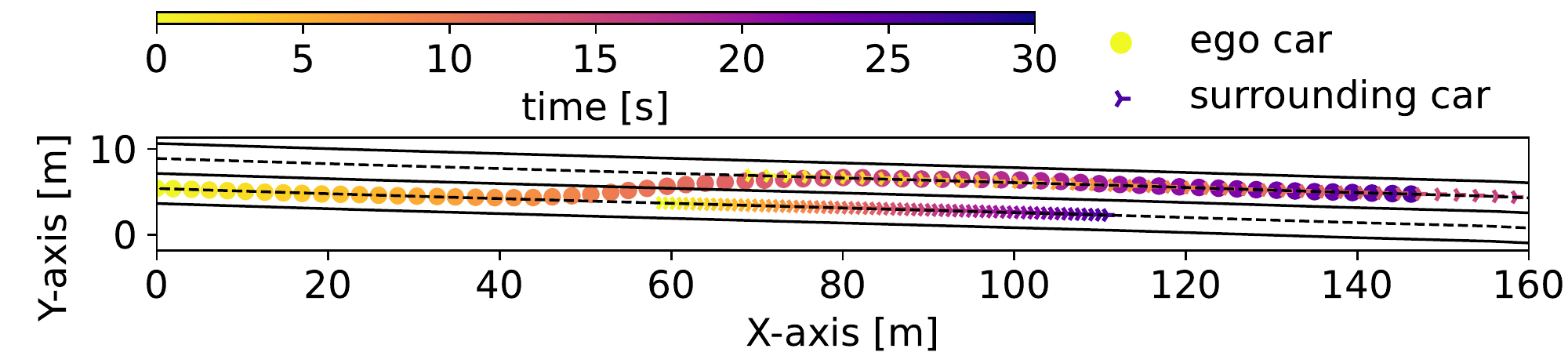}}
\\
\subfloat[]{\label{fig:tra_2}\includegraphics[width=0.9\linewidth]{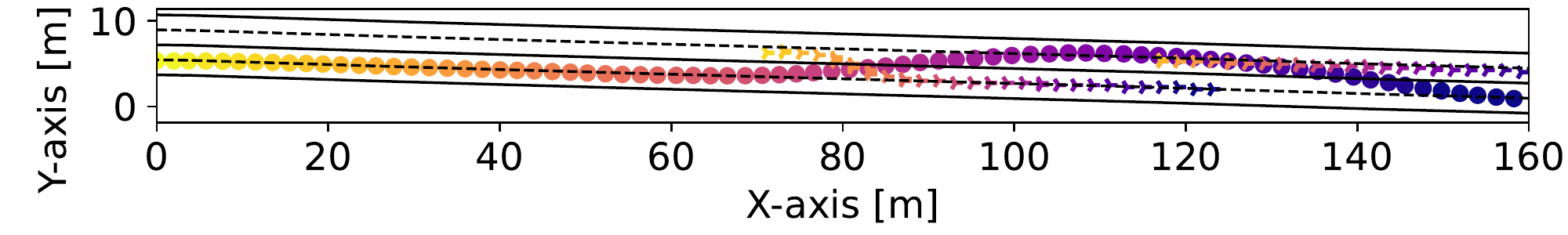}}
\\
\subfloat[]{\label{fig:tra_3}\includegraphics[width=0.9\linewidth]{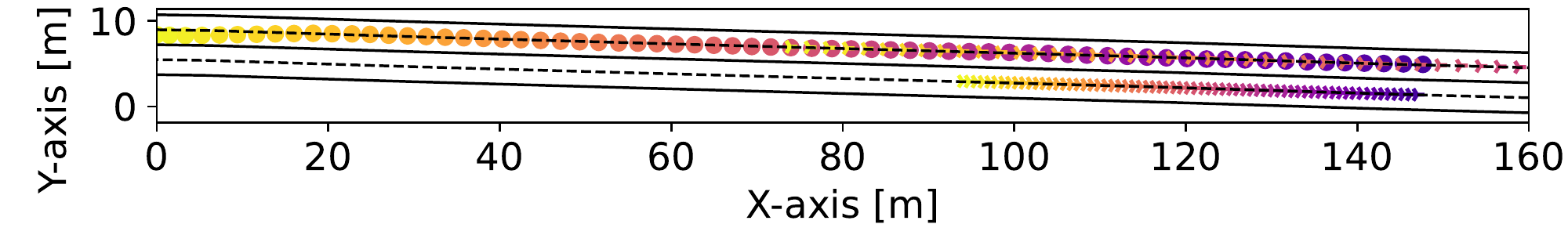}}
\\
\subfloat[]{\label{fig:tra_4}\includegraphics[width=0.9\linewidth]{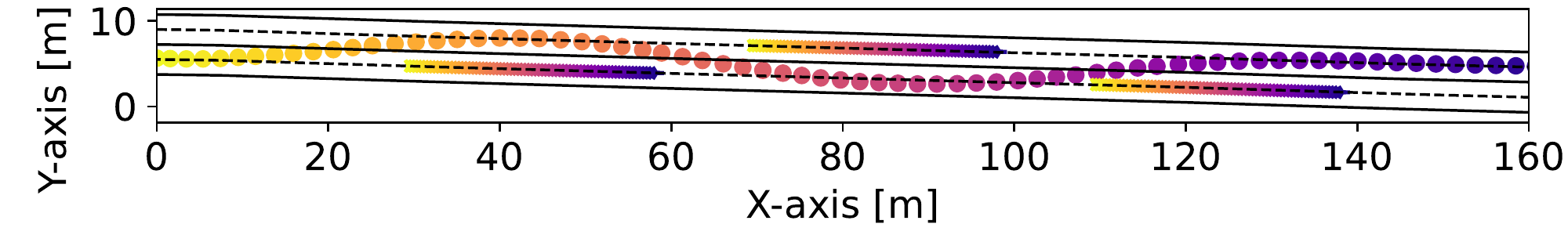}}
\\
\subfloat[]{\label{fig:tra_5}\includegraphics[width=0.9\linewidth]{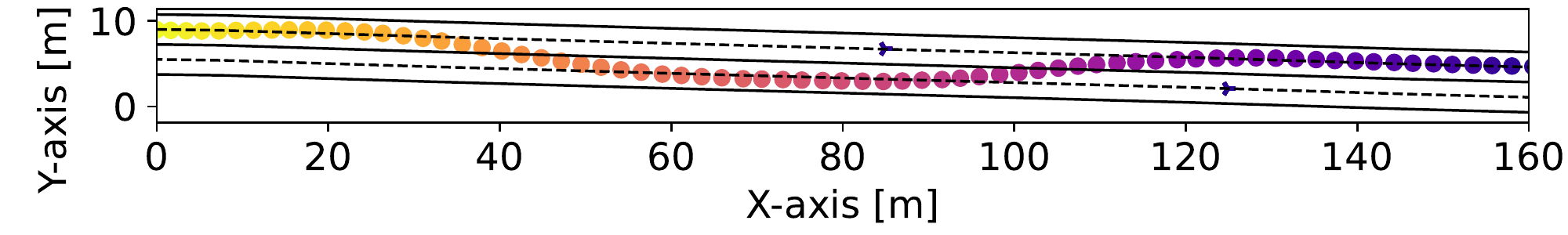}}
\caption{Trajectories under different traffic conditions.}
\label{f: experiment_tra}
\end{figure}

In addition, to increase the difficulty and test the robustness of the learned policy to external disturbances, we manually turn the steering wheel to impose interference during driving. The gray shaded area in Fig. \ref{f: experiment_example_2} represents the time interval during which we apply artificial steering wheel disturbances. The first interference occurred after driving about 10m. At this time, we manually turned the steering wheel to about -50$^\circ$ to make the ego car approach the right boundary of the road (See Fig. \ref{fig:experi_picture1}). When the IPC took over, the learned policy quickly sent out a left-turn command to make the vehicle quickly return to the right position (See Fig. \ref{fig:experi_picture2}). The second and fourth disturbances are similar to the first, during which we urgently turned the steering wheel to the left. The deviation caused by the two was also well corrected by the learned policy. Before the third interference occurred, the vehicle was in the initial stage of left lane change, aiming to avoid the preceding car. Then, we turned the steering wheel right to about -35$^\circ$ to interfere with the lane change (See Fig. \ref{fig:experi_picture3}). The learned policy still successfully completed the left lane change after taking over.
\begin{figure}[thpb]
\centering
\captionsetup{singlelinecheck = false,labelsep=period, font=small}
\captionsetup[subfigure]{justification=centering}
\subfloat[Trajectory]{\label{fig:experi_tra_d}\includegraphics[width=0.9\linewidth]{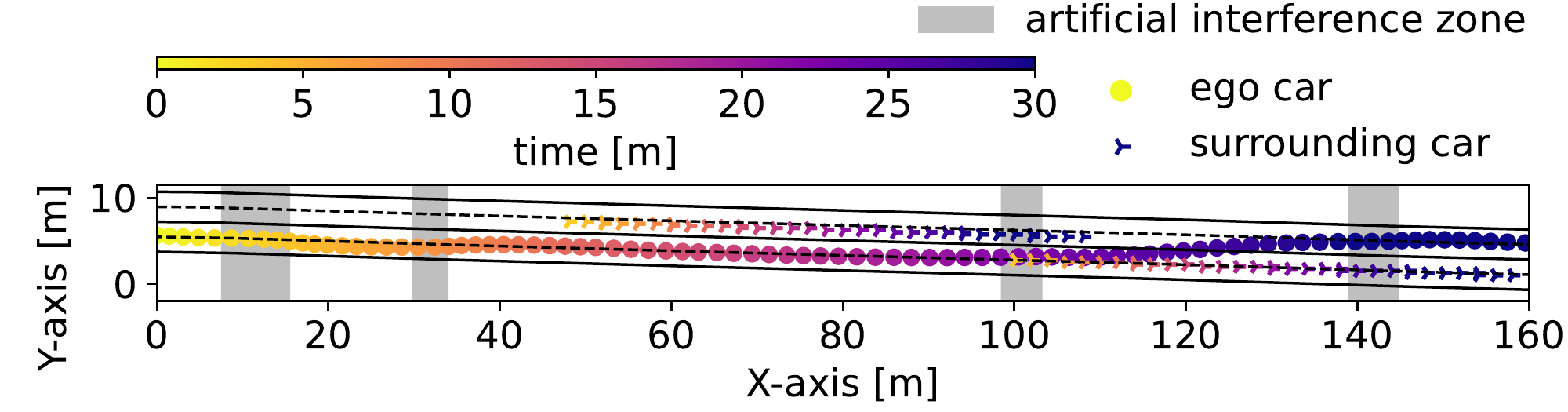}}
\\
\subfloat[Steering wheel angle]{\label{fig:experi_control_d}\includegraphics[width=0.9\linewidth]{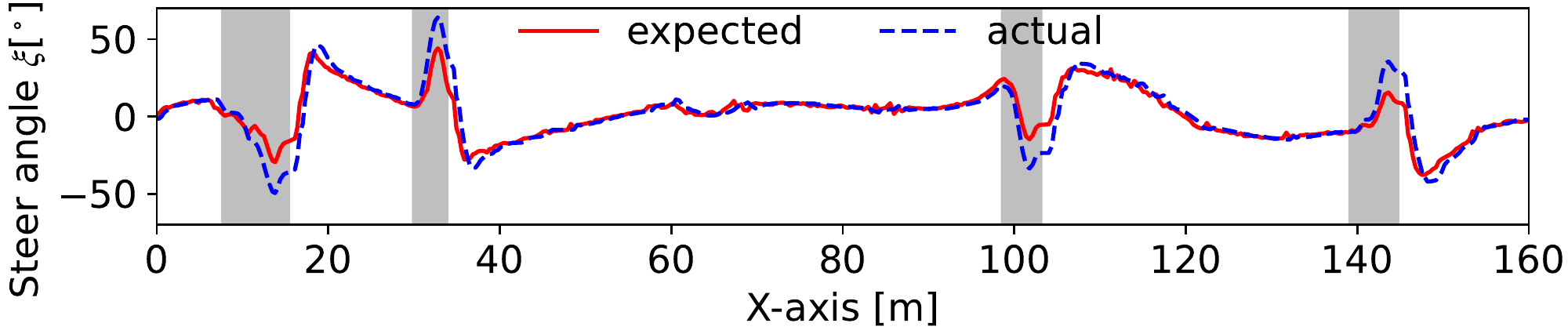}}
\\
\subfloat[Heading angle and yaw rate]{\label{fig:experi_state_d}\includegraphics[width=0.9\linewidth]{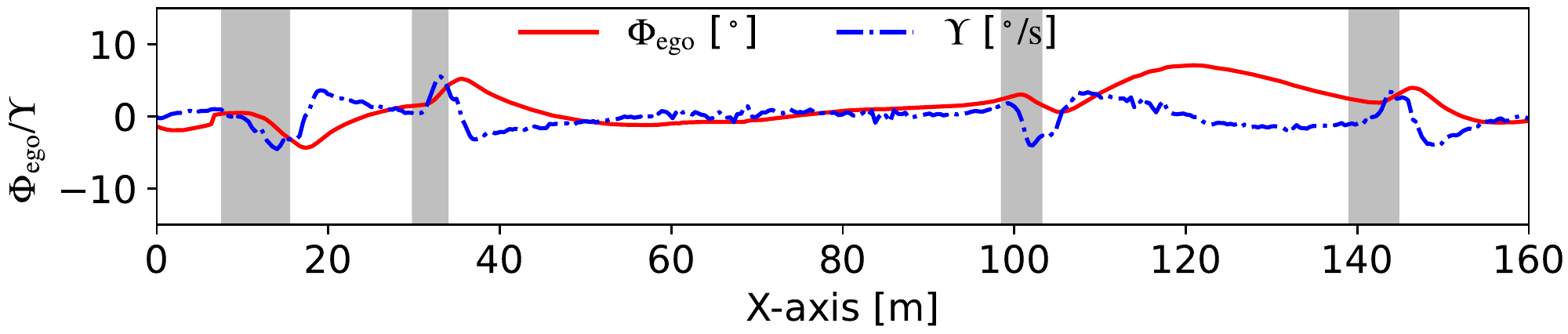}}
\\
\subfloat[Distance to lane center]{\label{fig:experi_deviation_d}\includegraphics[width=0.9\linewidth]{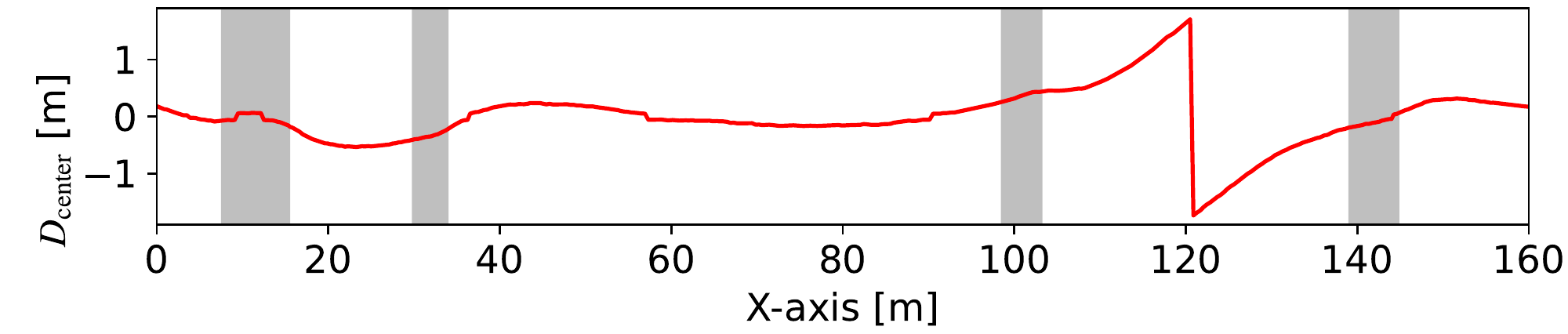}}
\caption{Experiment 2: trajectory, state and action curves. The gray shaded area represents the time interval during which we apply artificial steering wheel disturbances.}
\label{f: experiment_example_2}
\end{figure}

\begin{figure}[thpb]
\centering
\captionsetup{singlelinecheck = false,labelsep=period, font=small}
\captionsetup[subfigure]{justification=centering}
\subfloat[The 1st interference]{\label{fig:experi_picture1}\includegraphics[width=0.48\linewidth]{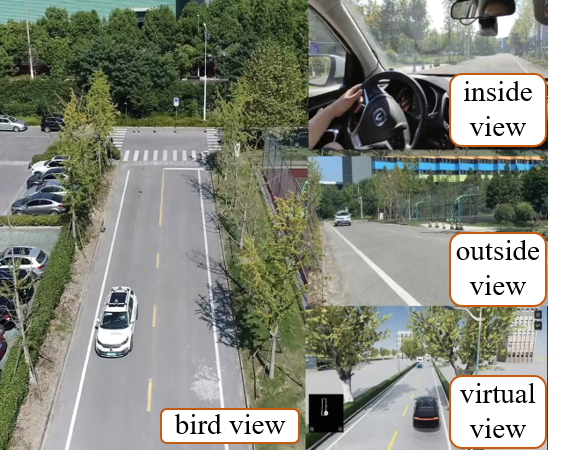}}\quad
\subfloat[The 1st take over ]{\label{fig:experi_picture2}\includegraphics[width=0.48\linewidth]{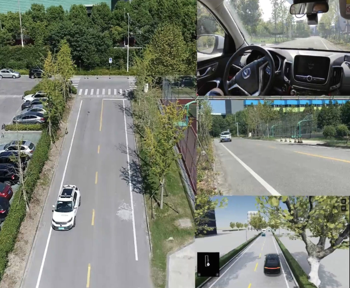}}
\\
\subfloat[The 3rd interference]{\label{fig:experi_picture3}\includegraphics[width=0.48\linewidth]{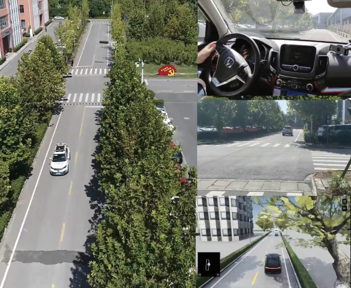}} \quad
\subfloat[The 3rd take over]{\label{fig:experi_picture4}\includegraphics[width=0.48\linewidth]{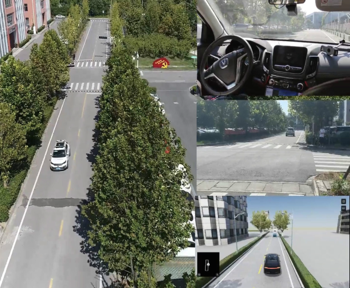}}
\caption{Images of experiment 2.}
\label{f: experiment_picture}
\end{figure}

To conclude, the learned policy of E-DSAC has the potential for real-vehicle applications. It can realize relatively safe and smooth decision-making and control under different traffic situations in multi-lane scenarios. Besides, when the steering wheel is disturbed by human beings, the learned policy can restore the vehicle to the proper status immediately after taking over. This implies that the learned policy is also compatible with human-machine shared driving.

\section{Conclusion}
\label{sec:conclusion}
In this paper, we propose an encoding distributional soft actor-critic (E-DSAC) for self-driving decision-making, which can deal with permutation sensitivity problems faced by existing related studies. Firstly, we develop an encoding distributional policy iteration (DPI) framework by embedding the encoding sum and concatenation method in the distributional RL framework. Then, the proposed DPI framework is proved to exhibit important properties in terms of convergence and global optimality. Based on the encoding DPI framework, we propose the E-DSAC algorithm by adding the gradient-based update rule of the feature NN to the policy evaluation process of the DSAC algorithm. We design a simulated multi-lane highway driving task and the corresponding reward function to verify the effectiveness of E-DSAC. Results show that the policy learned by E-DSAC can realize efficient, smooth, and relatively  safe  autonomous driving in the designed scenario. Compared with DSAC, E-DSAC has improved the final policy return by about three times, relaxing the requirement for predetermined sorting rules and vehicle number restrictions. Finally, a real vehicle test is conducted, and experimental
results show that the learned policy can smoothly complete maneuvers such as lane-keeping and
lane-changing in practical applications, so as to realize autonomous driving in response to different
surrounding vehicles. Besides, the learned policy has high robustness to  artificial steering wheel interference. This study poses great potential for the application of RL in actual driving scenarios.

\appendices
\section{Supplementary Results}
\subsection{Simulation Example}
\label{appen.example}
Fig. \ref{f:e2_trajectory} gives an instance of the overtaking process, and Fig. \ref{f:e2_state} shows the corresponding state curves. As shown in Fig. \ref{fig:e2_tra1} and \ref{fig:e2_tra2}, the ego vehicle first changes lanes to the innermost lane due to the too close following distance. After that, the self-vehicle found a suitable lane change position by accelerating and then completed the overtaking process through right lane change (See Fig. \ref{fig:e2_tra3}, \ref{fig:e2_tra4} and \ref{fig:e2_speed}) .
\begin{figure}[!htb]
\centering
\captionsetup{singlelinecheck = false,labelsep=period, font=small}
\captionsetup[subfigure]{justification=centering}
\subfloat[Initialization]{\label{fig:e2_tra1}\includegraphics[width=0.99\linewidth]{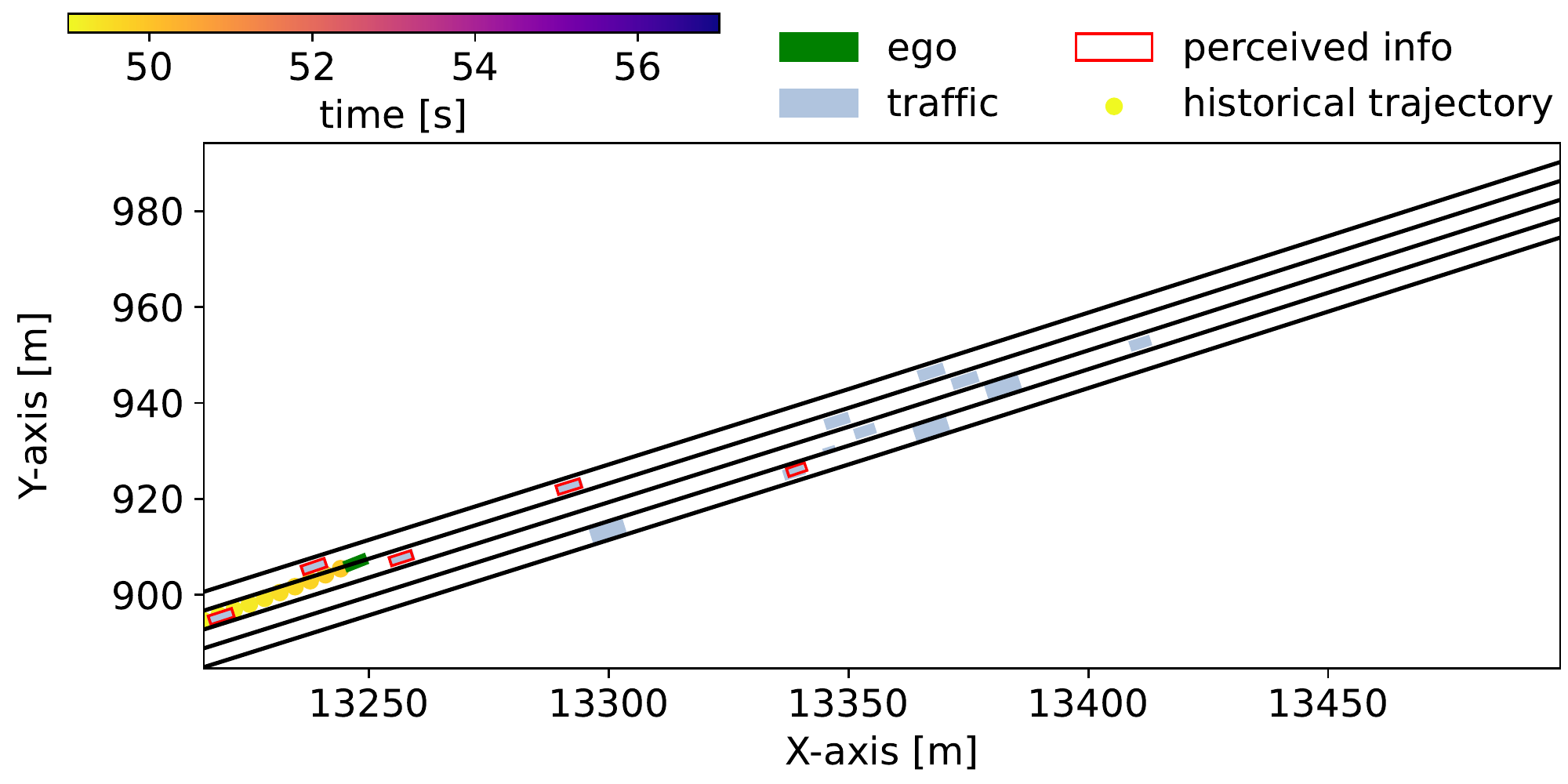}}
\\
\subfloat[Left lane change]{\label{fig:e2_tra2}\includegraphics[width=0.99\linewidth]{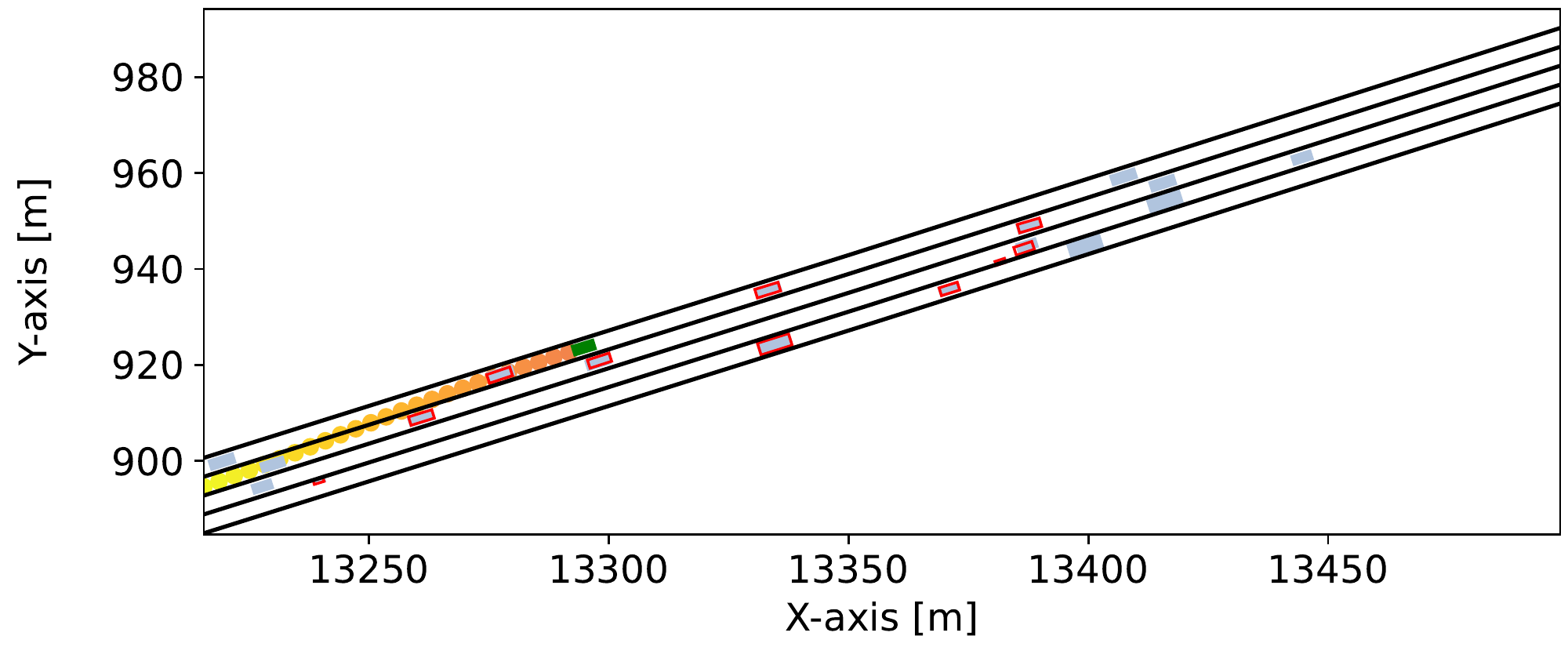}}
\\
\subfloat[Right lane change]{\label{fig:e2_tra3}\includegraphics[width=0.99\linewidth]{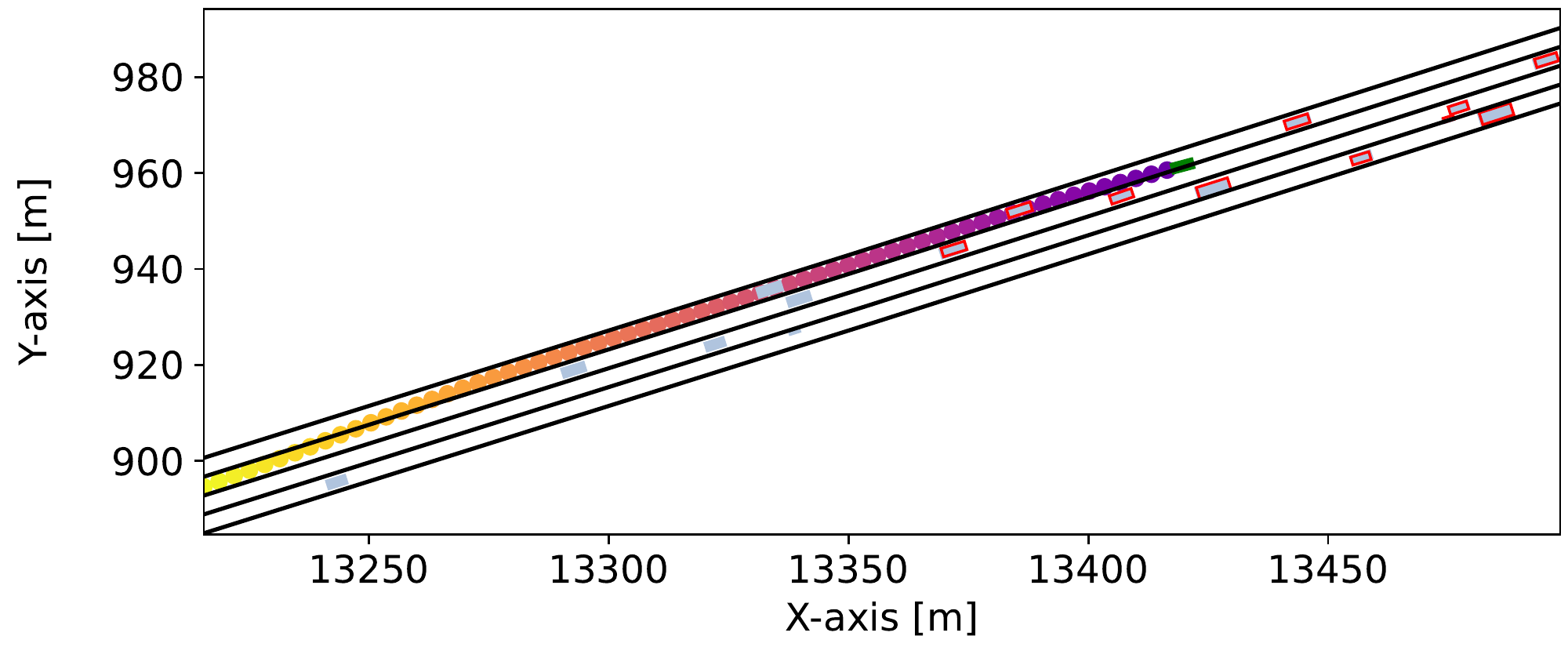}}
\\
\subfloat[Going straight]{\label{fig:e2_tra4}\includegraphics[width=0.99\linewidth]{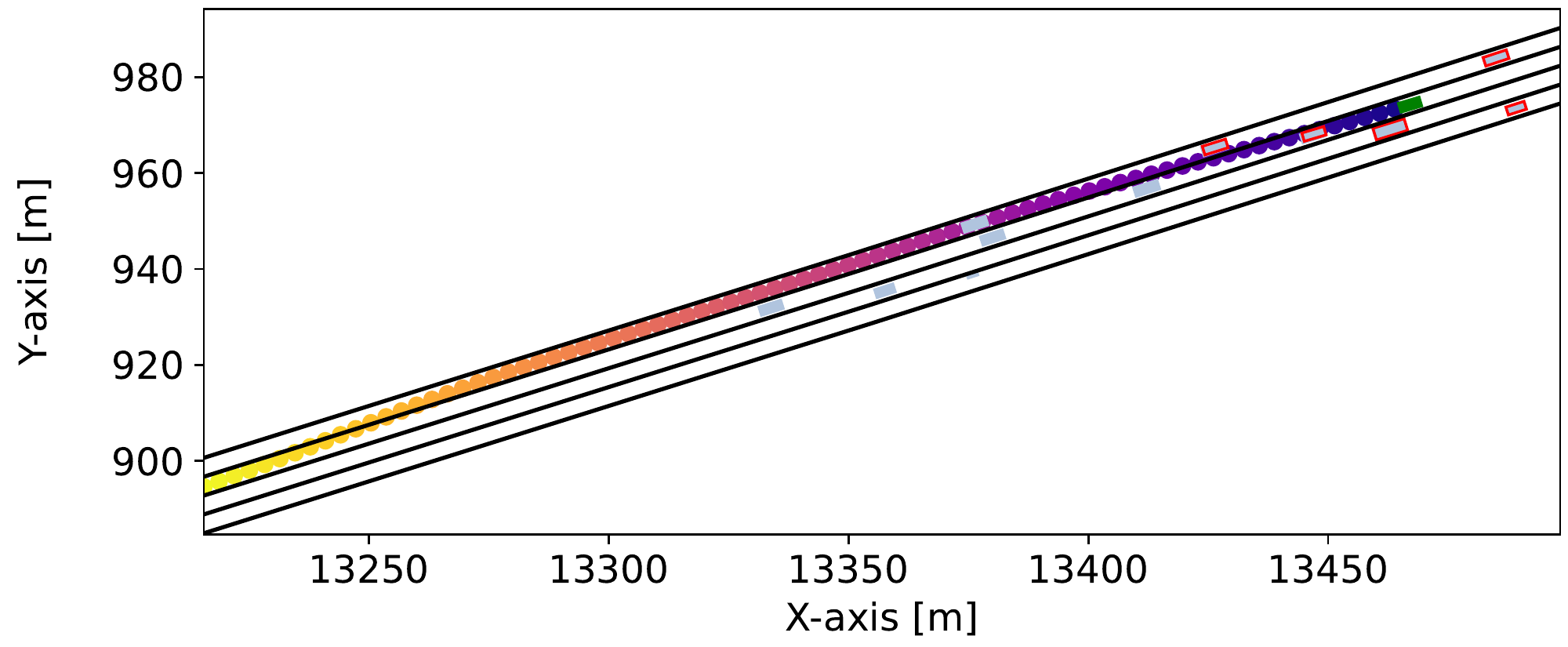}}\\
\caption{Simulation 2: trajectories. The red box represents the indicators of the perceived vehicle.}
\label{f:e2_trajectory}
\end{figure} 

\begin{figure}[!htb]
\centering
\captionsetup[subfigure]{justification=centering}
\subfloat[Action commands]{\label{fig:e2_action}\includegraphics[width=0.99\linewidth]{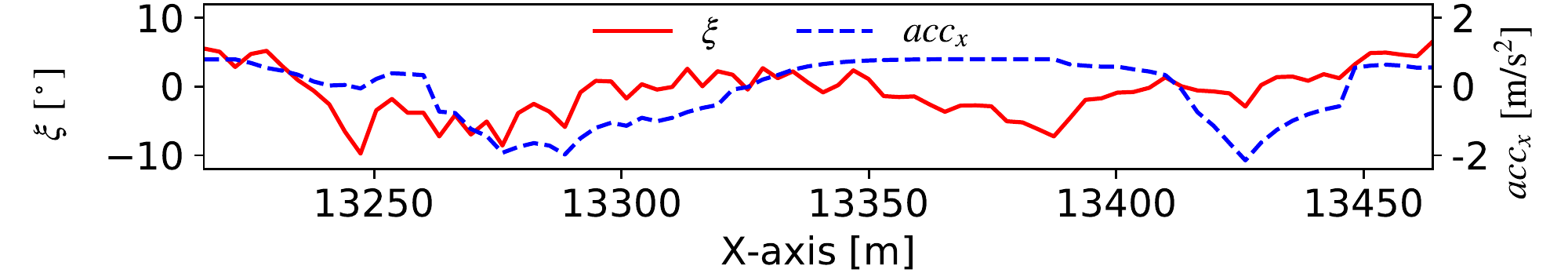}}\\
\subfloat[Heading angle and yaw rate]{\label{fig:e2_angle}\includegraphics[width=0.99\linewidth]{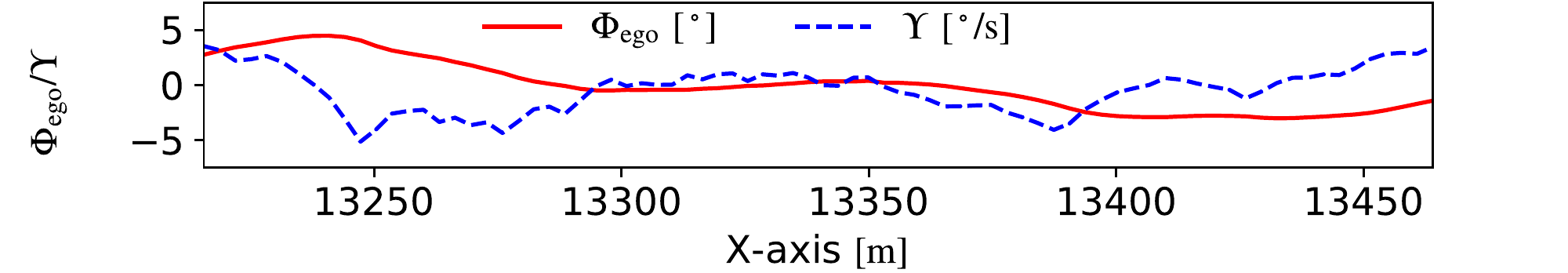}}\\
\subfloat[Speed]{\label{fig:e2_speed}\includegraphics[width=0.99\linewidth]{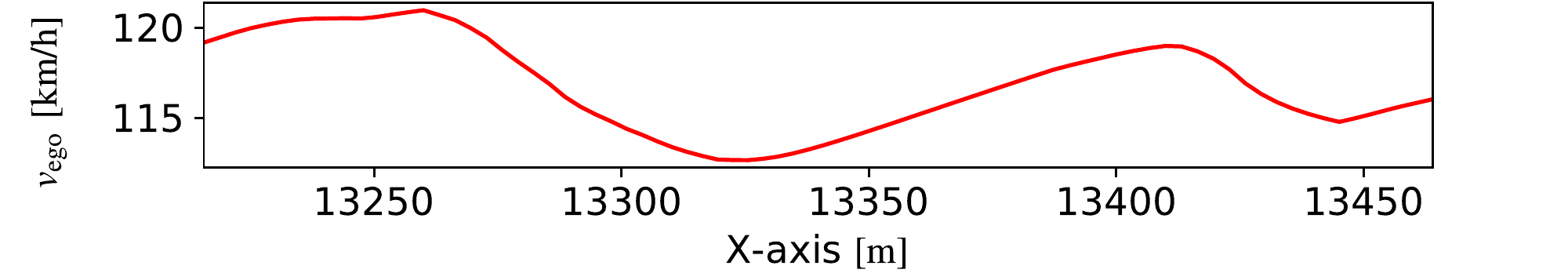}}\\
\caption{Simulation 2: state and control curves}
\label{f:e2_state}
\end{figure} 

\subsection{Extension}
\label{appen.extension}
The encoding DPI framework can also be extended to other RL algorithms, such as SAC, TD3, and DDPG. Based on similar ideas, we develop encoding versions of these algorithms, called E-SAC, E-TD3 and E-DDPG, respectively. We run these algorithms in the designed multi-lane highway driving task, and the learning curves are shown in Fig. \ref{f:return_extension}. The results show that the encoding RL framework has good compatibility with mainstream RL algorithms, and is of great significance for improving the policy performance in the field of RL-based autonomous driving.
\begin{figure}[!htb]
\captionsetup{singlelinecheck = false,labelsep=period, font=small}
\centering{\includegraphics[width=0.4\textwidth]{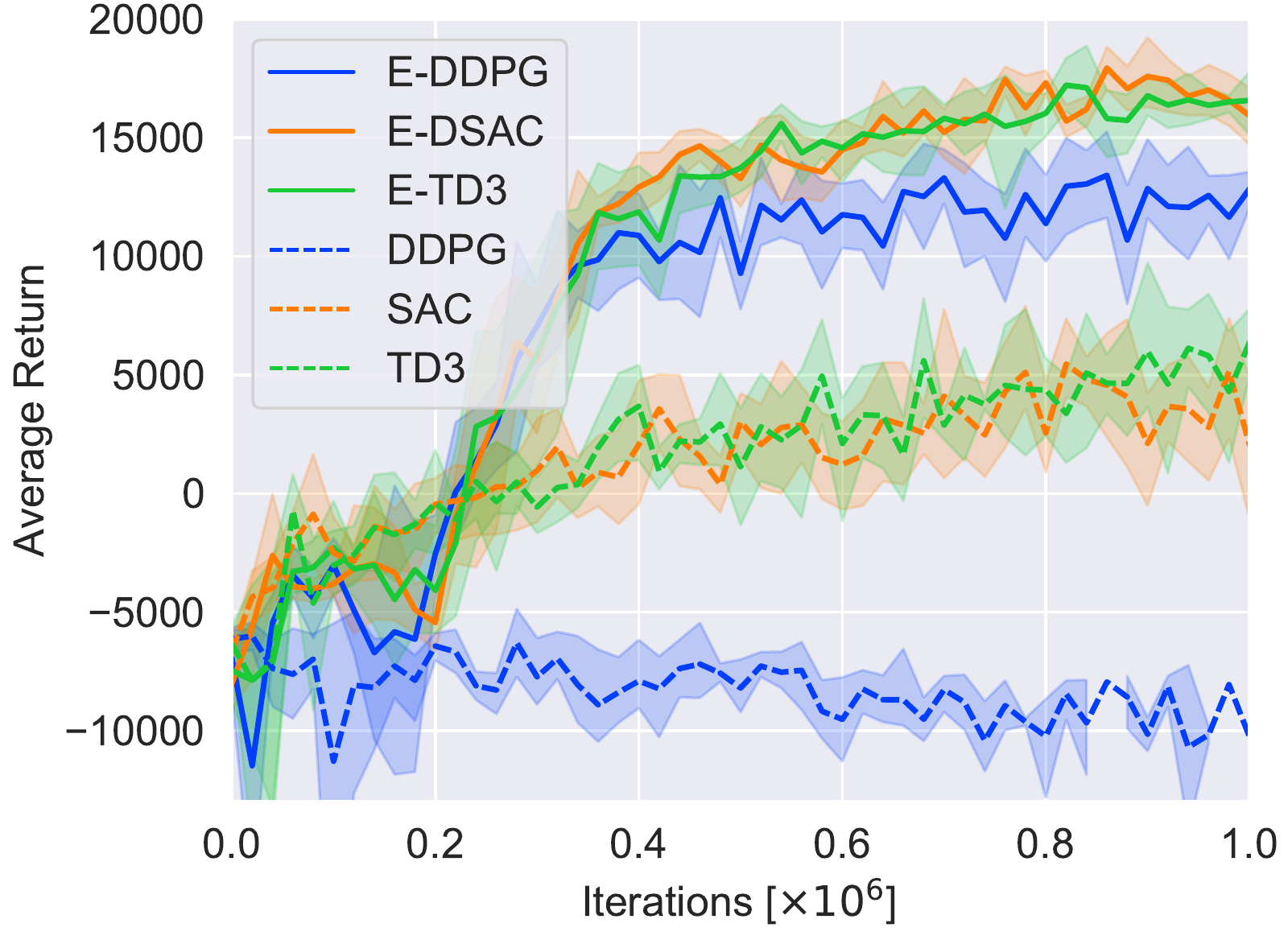}}
\caption{Return comparison. The solid lines correspond to the mean and the shaded regions correspond to 95\% confidence interval over 5 runs.}
\label{f:return_extension}
\end{figure}



\ifCLASSOPTIONcaptionsoff
  \newpage
\fi

\bibliographystyle{ieeetr}
\bibliography{reference}

\end{document}